\documentclass[runningheads]{llncs}
\usepackage[T1]{fontenc}

\usepackage{url}
\usepackage{multirow}
\usepackage{amssymb}
\usepackage{amsxtra}
\usepackage{makeidx}
\usepackage{amsfonts}
\usepackage{mathtools}
\usepackage{algorithm}
\usepackage{algorithmic}
\usepackage{float}
\usepackage{graphicx}
\usepackage{comment}
\usepackage[font=small,labelfont=bf]{caption}
\usepackage{subcaption}
\usepackage{breqn}
\usepackage[english]{babel}
\usepackage{ulem}
\usepackage[graphicx]{realboxes}
\usepackage{color}
\usepackage{hyperref}

\begin{document}

\title{Mirror Descent-Type Algorithms for the Variational Inequality Problem with Functional Constraints}
\titlerunning{Mirror Descent Methods for VIs with Functional Constraints}

\author{Mohammad S. Alkousa\inst{1}\orcidID{0000-0001-5470-0182} \and
Fedor S. Stonyakin\inst{2,3,1}\orcidID{0000-0002-9250-4438} \and Belal A. Alashqar\inst{2}\orcidID{0009-0002-2082-7198} \and Seydamet S. Ablaev\inst{3}\orcidID{0000-0002-9927-6503}
}

\authorrunning{M. Alkousa et al.}

\institute{Innopolis University, Innopolis, Universitetskaya Str., 1, 420500, Russia. \and Moscow Institute of Physics and Technology, 9 Institutsky lane, Dolgoprudny, 141701, Russia.\and
V.\,I.\,Vernadsky Crimean Federal University, 4 Academician Vernadsky Avenue, Simferopol, 295007, Republic of Crimea, Russia.\\
\email{m.alkousa@innopolis.ru, fedyor@mail.ru, alashkar.ba@phystech.edu, seydamet.ablaev@yandex.ru}}
\maketitle              

\begin{abstract}

Variational inequalities play a key role in machine learning research, such as generative adversarial networks, reinforcement learning, adversarial training, and generative models. This paper is devoted to the constrained variational inequality problems with functional constraints (inequality-type constraints). We propose some mirror descent-type algorithms that switch between productive and non-productive steps depending on the values of the functional constraints at iterations, with many different step size rules and stopping criteria. We analyze the proposed algorithms and prove their optimal convergence rate to achieve a solution with desired accuracy, for problems with bounded and monotone operators and Lipschitz convex functional constraints. In addition, we propose a modification of the proposed algorithms by considering each functional constraint in the calculation when we have a productive step, as well as the first constraint that violates the feasibility. This modification can save the running time of algorithms when we have many functional constraints. In addition, we provide an analysis of the proposed algorithms for $\delta$-monotone operators, allowing us to apply the proposed algorithms, as a special case, to constrained minimization problems when we do not have access to the exact information about the subgradient of the objective function. Numerical experiments that illustrate the work and performance of the proposed algorithms are also given.

\keywords{Mirror descent \and Variational Inequality \and Monotone operator \and Bounded operator \and Lipschitz continuous function \and Constrained optimization.}
\end{abstract}

\section*{Introduction}
Variational inequalities (VIs) cover, as a special case, many optimization problems such as minimization problems, saddle point problems, and fixed point problems. They often arise in various mathematical problems, such as optimal control, partial differential equations, mechanics, finance, etc. They play a key role in solving equilibrium and complementarity problems \cite{facchinei2003finite}, in machine learning research such as generative adversarial networks \cite{goodfellow2020generative}, supervised/unsupervised learning \cite{joachims2005support,xu2004maximum}, adversarial training \cite{madry2017towards}, and generative models \cite{daskalakis2017training,gidel2018variational}.

Numerous researchers have dedicated their efforts to exploring theoretical aspects related to the existence and stability of solutions and constructing iterative methods for solving the classical VIs (by classical, we mean the problems without functional ''inequality-type'' constraints). A significant contribution to the development of numerical methods for solving the classical VIs was made in the 1970s, when the extragradient method was proposed in \cite{korpelevich1976extragradient}. More recently, Nemirovski in his seminal work \cite{nemirovski2004prox} proposed a non-Euclidean variant of this method, called the Mirror Prox algorithm, which can be applied to Lipschitz continuous operators.   Different methods with similar complexity were also proposed in \cite{gasnikov2019adaptive,nesterov2007dual}. Besides that, in \cite{nesterov2007dual}, Nesterov proposed a method for variational inequalities with a bounded variation of the operator, i.e., with a non-smooth operator. There is also extensive literature on variations of the extragradient method that avoid taking two steps or two gradient computations per iteration, and so on (see, for example \cite{malitsky2020forward}). 

Another important class of VIs is the problem with functional constraints (inequality-type), see \eqref{main_problem_1} and \eqref{main_problem}. The presence of such constraints makes these problems more difficult to solve. This class of problems arises in many fields of mathematics, among them are economic equilibrium models \cite{Levin1993Mathematical}, constrained Markov potential games \cite{Alatur2023Provably,Jordan2024Independent}, generalized Nash equilibrium problems with jointly-convex constraints \cite{Jordan2023First}, hierarchical programming problems \cite{Migdalas1996Hierarchical}, and in mathematical physics \cite{Baiocchi1988Variational}. See \cite{Antipin2000Solution} for more details and examples. In addition, this class of problems encompasses important applications in machine learning, including reinforcement learning with safety constraints \cite{Tengyu2021}, and learning with fairness constraints \cite{Lowy2022,Muhammad2019}.

For VIs with functional constraints, the previous works have focused on primal-dual algorithms based on the (augmented) Lagrangian function to handle the constraints and penalty methods \cite{Auslender2003Variational,He2004modified,Zhu2003Augmented}. These algorithms and their convergence guarantees crucially depend on information about the optimal Lagrange multipliers. In \cite{Zhang2024Primal},  a primal method was proposed without knowing any information on the optimal Lagrange multipliers, and its convergence rate was proved for the problem with monotone operators under smooth constraints. In \cite{Yang2023Solving}, a first-order method (ACVI) was presented, which combines path-following interior point methods and primal-dual methods. In \cite{Chavdarova2024Primal}, the authors proposed a primal-dual approach to solve the VIs with general functional constraints by taking the last iteration of ACVI. Although there are many works for the VIs with functional constraints, they remain very few compared to the existing works for the classical constrained problem.

In this paper, to solve the variational inequality problem with functional constraints (inequality-type) \eqref{main_problem_1} and \eqref{main_problem}, we propose various mirror descent-type methods with different step size schemes. The mirror descent method, for minimization problems, originated in \cite{Nemirovskii1979efficient,Nemirovsky1983Complexity} and was later analyzed in \cite{Beck2003Mirror}. It is considered as a non-Euclidean extension of standard subgradient methods, which have a long history starting with the method for deterministic unconstrained problems and Euclidean setting in \cite{shor1967} and the generalization for constrained problems in \cite{Polyak1967,demyanov_book}, where the idea of step's switching between the direction of subgradient of the objective and the direction of subgradient of the constraint was suggested. The Mirror Descent method not only generalizes the standard subgradient methods but also achieves a better convergence rate \cite{article:doan_2019}. It is also applicable to optimization problems in Banach spaces where gradient descent is not \cite{article:doan_2019}. Some of the works on first-order methods for convex optimization problems with convex functional constraints include (for example, but not limited to) \cite{Alkousa2024Mirror,article:adaptive_mirror_2018,Stonyakin2018Adaptive,stonyakin2019} for the deterministic setting and \cite{Alkousa2019,Xu2020primal} for the stochastic setting. 

The paper consists of an introduction and five main sections, in addition to four appendices. In Sect. \ref{sect_basics}  we mentioned some basic facts, definitions, and tools for variational inequalities.  Sect. \ref{sectinAlgs} is devoted to the proposed algorithms. We proposed 7 algorithms, with different step size rules and stopping criteria. In Sect. \ref{section_analysis_algs} we analyzed the proposed algorithms and proved their optimal convergence rate for the class of variational inequality problems with bounded monotone operators and Lipschitz functional constraints. In Sect. \ref{sect_modifications}, we propose a modification of the proposed algorithms in the previous section. In this modification, we consider each constraint in the calculation when we have a productive step, and the first constraint that violates the feasibility. In Sect. \ref{sect_numerical} we present numerical experiments that compare the proposed algorithms for some constrained variational inequality and saddle point problems. In the last Section \ref{sec_consl}, we review the obtained results. In the appendices, we provide the proofs of the theorems about the analysis of the proposed algorithms, in addition to their analysis when the operator is $\delta$-monotone. We also provide additional experiments concerning the Forsaken game.

\section{Fundamentals }\label{sect_basics}
Let $(\mathbf{E},\|\cdot\|)$ be a normed finite-dimensional vector space, with an arbitrary norm $\|\cdot\|$, and $\mathbf{E}^*$ be the conjugate space of $\mathbf{E}$ with the following norm
$$
    \|y\|_{*}=\max\limits_{x \in \mathbf{E}}\{\langle y,x\rangle: \|x\|\leq1\},
$$
where $\langle y,x\rangle$ is the value of the continuous linear functional $y \in \mathbf{E}^*$ at $x \in \mathbf{E}$.

Let $Q \subset \mathbf{E}$ be a convex compact set with a diameter $D >0$,  and $\psi: Q \longrightarrow \mathbb{R}$ be a proper closed differentiable and $\sigma$-strongly convex (called prox-function or distance generating function). The corresponding Bregman divergence is defined as 
$$
    V (x, y) = \psi (x) - \psi (y) - \langle \nabla \psi (y), x - y \rangle \quad \forall x , y \in Q. 
$$
For the Bregman divergence, it holds the following inequality
\begin{equation}\label{eq_breg}
V(x, y) \geq \frac{\sigma}{2} \|y - x\|^2 \quad \forall x, y \in Q. 
\end{equation}

For all  $x\in Q$ and $p\in \mathbf{E}^*$, \textit{the proximal mapping operator} is defined as follows
$$
    \operatorname{Mirr}_x (p) = \arg\min\limits_{u\in Q} \big\{ \langle p, u \rangle + V(u, x) \big\}.
$$
We make the simplicity assumption, which means that $\operatorname{Mirr}_x (p)$ is easily computable.

The following well-known lemma describes the main property of the proximal mapping operator for a convex function.

\begin{lemma}\label{fundamental_lemma_deter_case}
Let $f: Q \longrightarrow \mathbb{R}$ be a convex subdifferentiable function over the convex set $Q$ and  $z = \operatorname{Mirr}_{y}(h \nabla f(y))$ for some $h>0$ and $y, z \in Q$. Then for each $x\in Q$ we have
\begin{equation}\label{eq:fundamental_lemma_deter_case}
    h \left(f(y) - f(x)\right) \leq h\langle\nabla f(y), y-x\rangle\leq\frac{h^2}{2}\|\nabla f(y)\|_{*}^2 + V(x,y) - V(x, z).
\end{equation}
\end{lemma}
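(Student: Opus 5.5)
The first inequality in \eqref{eq:fundamental_lemma_deter_case} is just convexity of $f$: for a subgradient $\nabla f(y)$ we have $f(x)\ge f(y)+\langle \nabla f(y), x-y\rangle$. Rearranging and multiplying by $h>0$ gives $h(f(y)-f(x))\le h\langle \nabla f(y), y-x\rangle$. All the work is in the second inequality, and we will prove it for an arbitrary vector $p=h\nabla f(y)\in\mathbf{E}^*$ with $z=\operatorname{Mirr}_y(p)$.

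\textbf{Step 1 (optimality condition).} The function $u\mapsto \langle p,u\rangle+V(u,y)$ is convex and differentiable, with gradient $p+\nabla\psi(u)-\nabla\psi(y)$. Its minimizer $z$ over the convex set $Q$ therefore satisfies
\[
\langle p+\nabla\psi(z)-\nabla\psi(y),\, x-z\rangle\ge 0 \quad \forall x\in Q .
\]

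\textbf{Step 2 (three-point identity).} Expanding the definition of $V$ gives, for all $x,y,z$,
\[
\langle \nabla\psi(y)-\nabla\psi(z),\, x-z\rangle = V(x,z)+V(z,y)-V(x,y).
\]
Combining this with Step 1 yields
\[
\langle p, z-x\rangle \le V(x,y)-V(x,z)-V(z,y).
\]

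\textbf{Step 3 (pass from $z$ to $y$).} Write $\langle p,y-x\rangle=\langle p,z-x\rangle+\langle p,y-z\rangle$. By the definition of the dual norm and Young's inequality,
\[
\langle p,y-z\rangle\le \|p\|_*\|y-z\|\le \tfrac{1}{2\sigma}\|p\|_*^2+\tfrac{\sigma}{2}\|y-z\|^2 .
\]
By \eqref{eq_breg}, $V(z,y)\ge \frac{\sigma}{2}\|y-z\|^2$, so the quadratic term is cancelled by $-V(z,y)$. This leaves
\[
\langle p,y-x\rangle\le \tfrac{1}{2\sigma}\|p\|_*^2+V(x,y)-V(x,z).
\]
Substituting $p=h\nabla f(y)$ gives the claim, assuming the standard normalization $\sigma=1$ implicit in the statement. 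For general $\sigma$ the first term would read $\frac{h^2}{2\sigma}\|\nabla f(y)\|_*^2$.

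The only delicate point is Step 1. The first-order optimality condition requires differentiability of $\psi$ at $z$. This is guaranteed by the standing assumption that $\psi$ is differentiable on $Q$; in general one would argue with subgradients of $\psi+\delta_Q$ instead. Everything else is direct algebra.
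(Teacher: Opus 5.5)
Your proof is correct. The paper gives no proof of this lemma (it is quoted as well known), so there is nothing to compare routes against; your argument is the standard one: first-order optimality of $z$, then the three-point identity for $V$, then Young's inequality with the $\frac{\sigma}{2}\|y-z\|^2$ term absorbed by $V(z,y)$.

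Your caveat about $\sigma$ is not an artifact of your method; it is genuinely needed. Take $\psi=\frac{\sigma}{2}\|\cdot\|_2^2$, so that $V(x,y)=\frac{\sigma}{2}\|x-y\|_2^2$. Whenever $y-p/\sigma\in Q$, so that $z=y-p/\sigma$, one gets the exact identity $\langle p,y-x\rangle=\frac{1}{2\sigma}\|p\|_*^2+V(x,y)-V(x,z)$. Hence the inequality as printed fails whenever $\sigma<1$ and $p\neq 0$. The lemma therefore implicitly assumes $\sigma=1$, or at least $\sigma\ge 1$, and so do the constants in the paper's step sizes and stopping rules, such as $\frac{\varepsilon^2}{2L_F^2}$.
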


Consider a set of convex subdifferentiable functionals $g_i: Q \longrightarrow \mathbb{R}$,  $i = 1,2, \ldots, m$. In addition, we assume that all functionals $ g_i $ are Lipschitz-continuous with some constant $ M_{g_i} >0 $, i.e.,
\begin{equation}\label{Lip_condition_for_gm}
    \left|g_i(x)-g_i(y) \right|\leq M_{g_i} \|x-y\| \quad \forall \; x,y \in Q \;\; \text{and} \;\; i =1, \ldots, m.
\end{equation}

It means that at every point $x \in Q$ and for any $ i = 1, \ldots, m$ there is a subgradient $\nabla g_i(x)$, such that $\|\nabla g_i(x)\|_{*} \leq M_{g_i}$.

In this paper, we consider the following constrained variational inequality problem
\begin{equation}\label{main_problem_1}
 \begin{aligned}
    \text{Find} \quad x^* \in Q & : \quad  \langle F(x), x^* - x \rangle \leq 0 \quad \forall x \in Q, 
    \\& \text{and} \quad g_i(x) \leq 0 \quad \forall i = 1, 2 \ldots, m, 
\end{aligned}   
\end{equation}
where $F: Q \longrightarrow \textbf{E}^*$ is a continuous, monotone operator, i.e., 
\begin{equation}\label{eq:CondMonotone}
    \langle F(x) - F(y) , x - y \rangle \geq 0 \quad \forall x, y \in Q.
\end{equation}

It is clear that instead of a set of Lipschitz-continuous functionals  $\{g_i(\cdot)\}_{i=1}^{m}$ we can see one Lipschitz-continuous functional constraint $g: Q \longrightarrow \mathbb{R}$, such that
\begin{equation}\label{functional_of_constraints_g(x)}
    g(x) = \max\limits_{1 \leq i \leq m} \{g_i(x)\}, \quad |g(x)-g(y)|\leq M_g\|x-y\| \; \quad \forall \; x,y\in Q,
\end{equation}
where $M_g = \max_{1 \leq i \leq m} \{M_{g_i}\}$. Thus, the problem \eqref{main_problem_1}, will be equivalent to the following problem
\begin{equation}\label{main_problem}
 \begin{aligned}
    \text{Find} \quad x^* \in Q & : \quad  \langle F(x), x^* - x \rangle \leq 0 \quad \forall x \in Q, 
    \\& \text{and} \quad g(x) \leq 0.
\end{aligned}   
\end{equation}

We say that the operator $F$ is bounded on $Q$, if there exists $L_F >0$ such that 
\begin{equation}\label{bounded_cond}
    \|F (x)\|_* \leq L_F, \quad \forall x \in Q. 
\end{equation}

To emphasize the extensiveness of the problem \eqref{main_problem_1} (or \eqref{main_problem}) without considering the functional constraints (as a special case),  we mention three common special cases for VIs.

\begin{example}[Minimization problem]\label{ex:minproblem}
Let us consider the minimization problem 
\begin{equation}\label{min_problem}
    \min_{x \in Q} f(x), 
\end{equation}
and assume that $F(x) = \nabla f(x)$, where $\nabla f(x)$ denotes the (sub)gradient of $f$ at $x$. Then, if $f$ is convex, it can be proved that $x^* \in Q$  is a solution to \eqref{main_problem_1} (without considering functional constraints) if and only if $x^* \in Q$ is a solution to \eqref{min_problem}.
\end{example}

\begin{example}[Saddle point problem]\label{ex:saddleproblem}  
 Let us consider the saddle point problem
\begin{equation}\label{minmax_problem}
    \min_{u \in Q_u}\max_{v \in Q_v}  f(u, v), 
\end{equation}
\end{example}
and assume that $F(x) : = F(u, v) = \left(\nabla_u f(u,v), -\nabla_v f(u, v)\right)^{\top}$, where $Q = Q_u \times Q_v$ with $Q_u \subseteq \mathbb{R}^{n_u}, Q_v \subseteq \mathbb{R}^{n_v}$. Then if $f$ is convex in $u$ and concave in $v$, it can be proved that $x^* \in Q$ is a solution to \eqref{main_problem_1} (without considering the functional constraints) if and only if $x^* = (u^*, v^*) \in Q$ is a solution to \eqref{minmax_problem}. 

\begin{example}[Fixed point problem]\label{ex:fixedproblem}
Let us consider the fixed point
problem
\begin{equation}\label{fixed_prob}
    \text{Find} \quad x^* \in Q \;\; \text{such that} \quad T(x^*) = x^*,
\end{equation}
where $T: \mathbb{R}^n \longrightarrow \mathbb{R}^n $ is an operator. Taking $F(x)  = x - T(x)$, it can be proved that $x^* \in Q = \mathbb{R}^n$ is a solution to \eqref{main_problem_1} (without considering functional constraints) if $F(x^*) = \textbf{0} \in \mathbb{R}^n$, that is, $x^*$ is a solution to \eqref{fixed_prob}. 
\end{example}

\begin{definition}
For some $\varepsilon >0$, we call a point $\widehat{x} \in Q$ an $\varepsilon$-solution of the problem \eqref{main_problem}, if 
\begin{equation}
    \left\langle F(x), \widehat{x} - x \right \rangle \leq \varepsilon \quad \forall x \in Q, \quad \text{and} \quad g(\widehat{x}) \leq \varepsilon. 
\end{equation}
\end{definition}

\section{Mirror Descent Type Algorithms for VIs with Functional Constraints}\label{sectinAlgs}

In this section, we introduce several mirror descent-based algorithms to solve problem \eqref{main_problem}. We present \textbf{seven} distinct algorithms, differing primarily in their step sizes and stopping criteria. The first two methods, Algorithm \ref{alg_1} and Algorithm \ref{alg_2}, are detailed below. The remaining algorithms (3–7) follow a similar structure and are summarized in Table \ref{algs3-7}.

As can be seen from the items of the proposed algorithms, see e.g., Algorithm \ref{alg_1}, the needed point (output point of the proposed algorithms) is selected among the points $x_i$ for which $g(x_i) \leq \varepsilon$, as an average of these points. Therefore, we will call step $i$ \textit{productive} if $g(x_i) \leq \varepsilon$. If the reverse inequality $g(x_i) > \varepsilon$ holds, then step $i$ will be called \textit{non-productive}.

Let $I$ and $J$ denote the set of indices of productive and non-productive steps, respectively. $|A|$ denotes the cardinality of the set $A$. 

\begin{algorithm}[h!]
\caption{Non-Adaptive Mirror Descent for VIs with functional constraints.}\label{alg_1}
\begin{algorithmic}[1]
\REQUIRE $\varepsilon>0, L_F > 0, M_g >0, D>0$ (the diameter of $Q$), $x_0 \in Q^{\circ}$, $R>0$ such that $\max_{x \in Q}V(x, x_0) \leq R^2$. 
\STATE $I=:\emptyset, J : = \emptyset.$ Set $k = 0.$
\REPEAT
\IF{$g(x_k)\leq\varepsilon$}
\STATE $ h_k^F : = h^F=\frac{\varepsilon}{L_F^2}, \;\; $ $x_{k+1}=\operatorname{Mirr}_{x_k}\left(h^F F(x_k)\right) \;\;$  and add $k$ to $I$,
\ELSE
\STATE $h_k^g : =  h^g=\frac{\varepsilon}{M_g^2}, \;\; $ $x_{k+1}=\operatorname{Mirr}_{x_k}\left(h^g \nabla g(x_k)\right)\;\; $ and add $k$ to $J$,
\ENDIF
\STATE Set $k:= k+1.$
\UNTIL{
\begin{equation}\label{stop1_alg1}
    \text{Stopping criterion 1:} \quad R^2 \leq \frac{\varepsilon^2 |I|}{2L_F^2} + \frac{\varepsilon^2 |J|}{2M_g^2} - \frac{\varepsilon D |J|}{M_g}, 
\end{equation}
or
\begin{equation}\label{stop2_alg1}
    \text{Stopping criterion 2:} \quad R^2 \leq \frac{\varepsilon^2 |I|}{2L_F^2} + \frac{\varepsilon^2 |J|}{2M_g^2}. 
\end{equation}
}
\end{algorithmic}
\end{algorithm}

\begin{algorithm}[h!]
\caption{Adaptive Mirror Descent for VIs with functional constraints.}\label{alg_2}
\begin{algorithmic}[1]
\REQUIRE $\varepsilon>0, M_g >0, D>0$ (the diameter of $Q$), $x_0 \in Q^{\circ}$, $R>0$ such that $\max_{x \in Q}V(x, x_0) \leq R^2$. 
\STATE $I=:\emptyset, J : = \emptyset.$ Set $k = 0.$
\REPEAT
\IF{$g(x_k)\leq\varepsilon$}
\STATE $M_k=\left\|F(x_k)\right\|_*, \;\; h_k^F=\frac{\varepsilon}{M_k^2}, \;\; $ $x_{k+1}=\operatorname{Mirr}_{x_k}\left(h_k^F F(x_k)\right)\;\;$  and add $k$ to $I$,
\ELSE
\STATE $M_k=\left\|\nabla g(x_k)\right\|_*, \;\; h_k^g=\frac{\varepsilon}{M_k^2},\;\;$  $x_{k+1}=\operatorname{Mirr}_{x_k}\left(h_k^g \nabla g(x_k)\right)\;\; $ and add $k$ to $J$,
\ENDIF
\STATE Set $k:= k+1.$
\UNTIL{
\begin{equation}\label{stop1_alg2}
\text{Stopping criterion 1:} \quad R^2 \leq \frac{\varepsilon^2}{2} \sum_{i = 0}^{k - 1} \frac{1}{M_i^2} - M_g D\varepsilon \sum_{i \in J} \frac{1}{\|\nabla g(x_i)\|_*^2}, 
\end{equation}
or
\begin{equation}\label{stop2_alg2}
\text{Stopping criterion 2:} \quad R^2 \leq \frac{\varepsilon^2}{2} \sum_{i = 0}^{k - 1} \frac{1}{M_i^2}.  
\end{equation}
}
\end{algorithmic}
\end{algorithm}

\begin{table}[h!]
\centering
\caption{The proposed Algorithms 3 -- 7. Mirror Descent Algorithms for VIs with functional constraints.}
\scalebox{0.67}{
{\begin{tabular}{|c|c|c|c|c|c|}
\hline
& $g(x_k) \leq$ & $h_k^F =$ & $h_k^g =$ & Stopping criterion 1 & Stopping criterion 2  \\ \hline
\textbf{Algorithm 3} & $\varepsilon M_g$ & $\frac{\varepsilon}{\|F(x_k)\|_*^2}$ & $\frac{\varepsilon}{M_g}$ & $R^2 \leq \frac{\varepsilon^2}{2} \sum\limits_{i \in I} \frac{1}{\|F(x_i)\|_*^2}  + \frac{\varepsilon^2}{2} |J| - \varepsilon D |J|$  & $R^2 \leq \frac{\varepsilon^2}{2}\sum\limits_{i \in I} \frac{1}{\|F(x_i)\|_*^2} + \frac{\varepsilon^2}{2} |J|$  \\ \hline
\textbf{Algorithm 4} & $\varepsilon$ & $\frac{\varepsilon}{\|F(x_k)\|_*}$ & $\frac{\varepsilon}{\left\|\nabla g(x_k)\right\|_*^2}$ & $R^2 \leq \frac{\varepsilon^2}{2} |I| + \left(\frac{\varepsilon^2}{2}  - \varepsilon M_g D\right) \sum\limits_{i \in J} \frac{1}{\|\nabla g(x_i)\|_*^2}$ & $R^2 \leq \frac{\varepsilon^2}{2} \left(|I| 
+ \sum\limits_{i \in J} \frac{1}{\|\nabla g(x_i)\|_*^2} \right)$ \\ \hline
\textbf{Algorithm 5} & $\varepsilon M_g$ & $\frac{\varepsilon}{\left\|F(x_k)\right\|_*}$ & $\frac{\varepsilon}{M_g}$ & $R^2 \leq \frac{\varepsilon^2}{2} \left(|I| + |J|\right) -\varepsilon D|J|$ & $R^2 \leq \frac{\varepsilon^2}{2} \left(|I| + |J|\right)$  \\ \hline
\textbf{Algorithm 6} & $\varepsilon$ & $\frac{\varepsilon}{M_g \left\|F(x_k)\right\|_* }$ & $\frac{\varepsilon}{M_g^2}$ & $R^2 \leq \frac{\varepsilon^2}{2M_g^2} \left(|I| + |J|\right) - \frac{\varepsilon D|J|}{M_g}$ & $R^2 \leq \frac{\varepsilon^2}{2M_g^2} \left(|I| + |J|\right)$ \\ \hline
\textbf{Algorithm 7} & $\varepsilon$ & \shortstack{$\theta \left(\sum\limits_{t = 0}^{k} M_t^2\right)^{-1/2}$, \\ $M_t =\|F(x_t)\|_*$}   & \shortstack{$\theta \left(\sum\limits_{t = 0}^{k} M_t^2\right)^{-1/2}$, \\ $M_t =\|\nabla g(x_t)\|_*$} & $k \geq  \frac{2 \theta}{\varepsilon} \left(\sum\limits_{t = 0}^{k} M_t^2\right)^{-1/2} + \frac{|J| M_g D}{\varepsilon}$ & $k \geq \frac{2 \theta}{\varepsilon} \left(\sum\limits_{t = 0}^{k} M_t^2\right)^{-1/2}$  \\ \hline
\end{tabular}}
}
\label{algs3-7}
\end{table}

\section{Analysis of the proposed algorithms}\label{section_analysis_algs}

In this section, we provide an analysis of the proposed Algorithms 1 --- 7. 

Assume that $F: Q \longrightarrow \mathbf{E}^*$ is a continuous, monotone, and bounded operator (see \eqref{bounded_cond}), $g (x): = \max\limits_{1 \leq i \leq m } \{g_i(x)\}$ is an $M_g$-Lipschitz convex function, where $g_i : Q \longrightarrow \mathbb{R}$ for $i = 1, \ldots, m$ are $M_{g_i}$-Lipschitz,  and $M_g=  \max\limits_{1 \leq i \leq m } \left\{M_{g_i}\right\}$.

\subsection{Analysis of Algorithm \ref{alg_1}}

For Algorithm \ref{alg_1}, we have the following result. 
\begin{theorem}\label{main_theorem_alg1}
By Algorithm \ref{alg_1}, we get a point $\widehat{x} = \frac{1}{\sum_{i \in I} h_i^F} \sum_{i \in I} h_i^F x_i$, such that 
\begin{equation}\label{feasibility_g_alg1}
    g(\widehat{x}) \leq \varepsilon,
\end{equation}
and 
\begin{enumerate}
\item with stopping criterion 1, we get 
\begin{equation}\label{alg1stop1_quality_F}
    \left\langle F(x), \widehat{x} - x \right\rangle < \varepsilon \quad \forall x \in Q, 
\end{equation}

\item with stopping criterion 2, we get 
\begin{equation}\label{alg1stop2_quality_F}
    \left\langle F(x), \widehat{x} - x \right\rangle < \varepsilon + \frac{D L_F^2 |J|}{M_g |I|}, \quad \forall x \in Q.
\end{equation}
\end{enumerate}
\end{theorem}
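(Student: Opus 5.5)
Feasibility \eqref{feasibility_g_alg1} follows directly from convexity of $g$. Since $\widehat{x}$ is a convex combination of the points $x_i$, $i\in I$, each of which satisfies $g(x_i)\le\varepsilon$ by the definition of a productive step, Jensen's inequality gives $g(\widehat{x})\le \frac{1}{\sum_{i\in I}h^F_i}\sum_{i\in I}h^F_i g(x_i)\le\varepsilon$. Implicitly we also need $I\neq\emptyset$. I would extract this from the stopping criteria: if $I=\emptyset$, then the argument below produces a contradiction of the form $0<\ldots$. Alternatively, I would argue it as in the standard switching-subgradient analyses.

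For the quality bound, fix $x\in Q$ and apply the argument of Lemma \ref{fundamental_lemma_deter_case} to each step. This lemma only uses the optimality condition of $\operatorname{Mirr}$, so it holds with any vector $p$ in place of $\nabla f(y)$. For $k\in I$ we get $h^F\langle F(x_k),x_k-x\rangle\le \frac{(h^F)^2}{2}L_F^2+V(x,x_k)-V(x,x_{k+1})$. For $k\in J$ we get $h^g(g(x_k)-g(x))\le \frac{(h^g)^2}{2}M_g^2+V(x,x_k)-V(x,x_{k+1})$. Summing over all $k$, telescoping, and using $V(x,x_0)\le R^2$ and $V\ge0$ yields
\[
h^F\sum_{i\in I}\langle F(x_i),x_i-x\rangle\le R^2+\frac{\varepsilon^2|I|}{2L_F^2}+\frac{\varepsilon^2|J|}{2M_g^2}-\sum_{k\in J}h^g\big(g(x_k)-g(x)\big).
\]
By monotonicity, $\langle F(x),x_i-x\rangle\le\langle F(x_i),x_i-x\rangle$, and the left side then equals $h^F|I|\,\langle F(x),\widehat{x}-x\rangle$ because $h^F$ is constant.

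The main obstacle is the term coming from the non-productive steps. We know $g(x_k)>\varepsilon$ on $J$, but $x$ is an arbitrary point of $Q$, not a feasible one, so $g(x)$ is uncontrolled. I would bound it via the Lipschitz property: $g(x_k)-g(x)\ge -M_g\|x_k-x\|\ge -M_gD$. This gives $-\sum_{J}h^g(g(x_k)-g(x))\le |J|\frac{\varepsilon}{M_g^2}M_gD=\frac{\varepsilon D|J|}{M_g}$. Note that I do not even use $g(x_k)>\varepsilon$ here.

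With stopping criterion 1 (the inequality $R^2\le \frac{\varepsilon^2|I|}{2L_F^2}+\frac{\varepsilon^2|J|}{2M_g^2}-\frac{\varepsilon D|J|}{M_g}$), the right side is at most $2R^2+\ldots$. More carefully, I would rewrite the sum as $-R^2$ on the other side and obtain $h^F|I|\langle F(x),\widehat{x}-x\rangle\le \frac{\varepsilon^2|I|}{L_F^2}$ up to the strictness. Dividing by $h^F|I|=\varepsilon|I|/L_F^2$ gives a bound $\le\varepsilon$. Strictness can be obtained from a strict inequality somewhere, e.g.\ the $-V(x,x_{k+1})$ term, or by using the criterion checked at the previous iteration. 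I would just track it loosely.

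With stopping criterion 2, the $\frac{\varepsilon D|J|}{M_g}$ term survives. Dividing by $\varepsilon|I|/L_F^2$, it contributes $\frac{DL_F^2|J|}{M_g|I|}$, which gives \eqref{alg1stop2_quality_F}. Finiteness of the algorithm is clear for criterion 2, since the right side grows linearly in $k$. For criterion 1 it also holds provided $\varepsilon<2M_gD$ fails appropriately; otherwise it should be noted as an assumption, but the theorem only claims the guarantee upon stopping.
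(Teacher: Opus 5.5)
There is a genuine gap in how you handle the non-productive steps, and it is exactly the point you flag with ``I do not even use $g(x_k)>\varepsilon$ here.'' With your estimate $-h^g\sum_{k\in J}(g(x_k)-g(x))\le \frac{\varepsilon D|J|}{M_g}$, the summed inequality becomes
\[
h^F|I|\,\langle F(x),\widehat{x}-x\rangle\le R^2+\frac{\varepsilon^2|I|}{2L_F^2}+\frac{\varepsilon^2|J|}{2M_g^2}+\frac{\varepsilon D|J|}{M_g}.
\]
Inserting stopping criterion 1 then gives only $h^F|I|\langle F(x),\widehat{x}-x\rangle\le \frac{\varepsilon^2|I|}{L_F^2}+\frac{\varepsilon^2|J|}{M_g^2}$, i.e.\ $\langle F(x),\widehat{x}-x\rangle\le\varepsilon+\frac{\varepsilon L_F^2|J|}{M_g^2|I|}$. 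Under criterion 2 the same spurious term $\frac{\varepsilon L_F^2|J|}{M_g^2|I|}$ is added to the claimed bound. So your assertion that moving $R^2$ across yields $\frac{\varepsilon^2|I|}{L_F^2}$ is false whenever $J\neq\emptyset$. Both stopping rules contain $+\frac{\varepsilon^2|J|}{2M_g^2}$, whereas your inequality would need $-\frac{\varepsilon^2|J|}{2M_g^2}$ there.

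The missing idea is to combine non-productivity with the existence of a feasible point. The paper takes a solution $x_*$, so $g(x_*)\le 0$. For $k\in J$ it writes $g(x_k)-g(x)=(g(x_k)-g(x_*))-(g(x)-g(x_*))>\varepsilon-M_gD$, using $g(x_k)-g(x_*)\ge g(x_k)>\varepsilon$ and $g(x)-g(x_*)\le M_g\|x-x_*\|\le M_gD$. This gives $-h^g\sum_{k\in J}(g(x_k)-g(x))<-\frac{\varepsilon^2|J|}{M_g^2}+\frac{\varepsilon D|J|}{M_g}$, and the right-hand side becomes
\[
\frac{\varepsilon^2|I|}{L_F^2}-\Bigl(\frac{\varepsilon^2|I|}{2L_F^2}+\frac{\varepsilon^2|J|}{2M_g^2}-\frac{\varepsilon D|J|}{M_g}\Bigr)+R^2.
\]
This is exactly the quantity the two stopping criteria are designed to control.

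The extra $-h^g\varepsilon|J|$ is also the source of the strict inequality (for $J\neq\emptyset$). Your suggested sources, the $-V(x,x_{k+1})$ term and the previous iteration's test, do not provide strictness in general. The same ingredient would also justify your side remark $I\neq\emptyset$. With your weaker estimate, the case $I=\emptyset$ yields only a trivially true inequality. Taking $x=x_*$ in the $J$-step inequalities instead gives $\frac{\varepsilon^2|J|}{2M_g^2}<R^2$, which contradicts either criterion when $I=\emptyset$.

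Your feasibility argument via Jensen and your treatment of the productive steps via monotonicity are fine and coincide with the paper's.
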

\begin{proof}
The proof is in the Appendix. See Subsec. \ref{append_proof_theo_alg1}. 
\end{proof}

\begin{remark}
Since $M_g \leq \max \left\{ M_g, L_F \right\}$ and $L_F \leq \max \left\{ M_g, L_F \right\}$, then from the stopping criterion 2 of Algorithm \ref{alg_1}, we find
$$
    \frac{\varepsilon^2}{2 L_F^2} |I| + \frac{\varepsilon^2}{2 M_g^2} |J| \geq \frac{\varepsilon^2 (|I| + |J|)}{2 \max\left\{L_F^2, M_g^2\right\}} = \frac{\varepsilon^2 k }{2 \max\left\{L_F^2, M_g^2\right\}} \geq R^2. 
$$
Thus, after no more than $k = \left\lceil \frac{2 R^2 \max\left\{L_F^2, M_g^2\right\}}{\varepsilon^2}  \right\rceil = \mathcal{O}\left(\frac{1}{\varepsilon^2}\right)$ iterations of Algorithm \ref{alg_1}, with the second stooping criterion, we get \eqref{feasibility_g_alg1} and \eqref{alg1stop2_quality_F}. 
\end{remark}

\subsection{Analysis of Algorithm \ref{alg_2}}

For Algorithm \ref{alg_2}, we have the following result. 
\begin{theorem}\label{main_theorem_alg2}
By Algorithm \ref{alg_2}, we get a point $\widehat{x}= \frac{1}{\sum_{i \in I} h_i^F} \sum_{i \in I} h_i^F x_i $, such that, 
\begin{equation}\label{feasibility_g_alg2}
    g(\widehat{x}) \leq \varepsilon,
\end{equation}
and 
\begin{enumerate}
\item with stopping criterion 1, we get 
\begin{equation}\label{alg2stop1_quality_F}
    \left\langle F(x), \widehat{x} - x \right\rangle < \varepsilon \quad \forall x \in Q,
\end{equation}

\item with stopping criterion 2, we get 
\begin{equation}\label{alg2stop2_quality_F}
    \left\langle F(x), \widehat{x} - x \right\rangle < \varepsilon +   M_g D\sum_{i \in J} \frac{1}{\|\nabla g(x_i)\|_*^2} \left( \sum_{i \in I} \frac{1}{\|F(x_i)\|_*^2}  \right)^{-1} \quad \forall x \in Q. 
\end{equation}
\end{enumerate}
\end{theorem}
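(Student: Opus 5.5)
The proof will follow the standard switching-subgradient argument. We apply Lemma \ref{fundamental_lemma_deter_case} (in its operator form, which uses only the optimality condition of $\operatorname{Mirr}$ and not convexity of $f$) at every step, sum over $k$, and separate productive from non-productive steps.

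\textbf{Per-step inequalities.} For $i\in I$, with $h_i^F=\varepsilon/\|F(x_i)\|_*^2$, the lemma applied to the linear map $u\mapsto\langle F(x_i),u\rangle$ gives, for any $x\in Q$,
\begin{equation*}
h_i^F\langle F(x_i),x_i-x\rangle\le \frac{\varepsilon^2}{2\|F(x_i)\|_*^2}+V(x,x_i)-V(x,x_{i+1}).
\end{equation*}
For $i\in J$, with $h_i^g=\varepsilon/\|\nabla g(x_i)\|_*^2$, the same bound with $\nabla g(x_i)$ in place of $F(x_i)$ holds. There we bound the left side from below: $\langle\nabla g(x_i),x_i-x\rangle\ge -\|\nabla g(x_i)\|_*\,\|x_i-x\|\ge -M_gD$. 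Hence
\begin{equation*}
-h_i^g M_g D\le \frac{\varepsilon^2}{2\|\nabla g(x_i)\|_*^2}+V(x,x_i)-V(x,x_{i+1}).
\end{equation*}
A sharper alternative would use $g(x_i)-g(x)>0$ when $x$ is feasible, but the statement is for all $x\in Q$, so the crude bound is the one to use. This matches the $M_gD\varepsilon\sum_{J}$ term in the stopping criteria.

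\textbf{Summation.} Summing over $i=0,\dots,k-1$, telescoping, and using $V(x,x_0)\le R^2$ and $V(x,x_k)\ge0$ gives
\begin{equation*}
\sum_{i\in I}h_i^F\langle F(x_i),x_i-x\rangle\le R^2-\frac{\varepsilon^2}{2}\sum_{i=0}^{k-1}\frac1{M_i^2}+M_gD\varepsilon\sum_{i\in J}\frac{1}{\|\nabla g(x_i)\|_*^2}.
\end{equation*}
Monotonicity gives $\langle F(x),x_i-x\rangle\le\langle F(x_i),x_i-x\rangle$. Dividing by $\sum_I h_i^F$ and using linearity in $x_i$ then bounds $\langle F(x),\widehat x-x\rangle$ by the right side divided by $\sum_I h_i^F=\varepsilon\sum_I\|F(x_i)\|_*^{-2}$.

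\textbf{Stopping criteria.} Under stopping criterion 1 the right-hand side is $\le0$, so $\langle F(x),\widehat x-x\rangle\le0<\varepsilon$. Strictly, this only gives $\le 0$; the strict inequality with $\varepsilon$ is what the statement requires, and it is amply satisfied. Under criterion 2 we have $R^2-\frac{\varepsilon^2}{2}\sum 1/M_i^2\le0$, and dividing the remaining term yields exactly $M_gD\sum_J\|\nabla g(x_i)\|_*^{-2}\bigl(\sum_I\|F(x_i)\|_*^{-2}\bigr)^{-1}$, which is below $\varepsilon$ plus that term.

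\textbf{Feasibility and the obstacle.} The bound $g(\widehat x)\le\varepsilon$ follows from convexity of $g$ and Jensen, since $\widehat x$ is a convex combination of the points $x_i$, $i\in I$, each with $g(x_i)\le\varepsilon$. The main obstacle is ensuring $I\neq\emptyset$, so that $\widehat x$ is well defined. Under criterion 1: if $I=\emptyset$, then summing over $J$ with a feasible $x=x^*$ gives $\sum_J h_i^g(g(x_i)-g(x^*))>\varepsilon\sum_J h_i^g$. This contradicts the summed inequality once the criterion holds, since the criterion forces $R^2\le\frac{\varepsilon^2}2\sum_J\|\nabla g\|^{-2}$. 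For criterion 2 the same contradiction argument with $x=x^*$ works directly. We therefore prove $I\ne\emptyset$ this way first, and only then carry out the estimates above.
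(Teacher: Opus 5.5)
There is a genuine gap, and it is in your summation step. Lemma~\ref{fundamental_lemma_deter_case} puts $+\frac{h^2}{2}\|\cdot\|_*^2$, i.e.\ $+\frac{\varepsilon^2}{2M_i^2}$, on the right-hand side at every step. Summing your per-step inequalities, with the crude bound $h_i^g\langle\nabla g(x_i),x_i-x\rangle\ge -h_i^gM_gD$ on $J$, therefore gives
\[
\sum_{i\in I}h_i^F\langle F(x_i),x_i-x\rangle\le R^2+\frac{\varepsilon^2}{2}\sum_{i=0}^{k-1}\frac{1}{M_i^2}+M_gD\varepsilon\sum_{i\in J}\frac{1}{\|\nabla g(x_i)\|_*^2},
\]
with a plus sign in front of $\frac{\varepsilon^2}{2}\sum M_i^{-2}$, not a minus. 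From this, criterion 1 only yields $\langle F(x),\widehat x-x\rangle\le\varepsilon+\varepsilon\sum_{i\in J}h_i^g/\sum_{i\in I}h_i^F$, and criterion 2 carries the same extra term. So neither estimate of the theorem follows: the crude bound is too weak.

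Your conclusion $\langle F(x),\widehat x-x\rangle\le 0$ for all $x\in Q$ should have been a warning, since it claims an exact weak solution after finitely many steps. It already fails for $F\equiv c\neq 0$ on a Euclidean ball with an inactive constraint. There $\widehat x$ puts positive weight on the interior point $x_0$, so it cannot be the boundary minimizer of $\langle c,\cdot\rangle$.

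The missing idea is exactly the one you set aside. On each non-productive step you must gain $\varepsilon h_i^g$, and you can do this for arbitrary $x\in Q$ by routing through the feasible solution $x_*$. By convexity, $h_i^g\langle\nabla g(x_i),x_i-x\rangle\ge h_i^g(g(x_i)-g(x))$. Then $g(x_i)-g(x)=(g(x_i)-g(x_*))+(g(x_*)-g(x))>\varepsilon-M_gD$, because $g(x_i)>\varepsilon$, $g(x_*)\le 0$, and $g(x)-g(x_*)\le M_g\|x-x_*\|\le M_gD$ (the paper uses the subgradient inequality at $x$ here instead). Feasibility of $x$ itself is never needed.

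The $J$-steps then contribute $\frac{\varepsilon}{2}h_i^g-\varepsilon h_i^g+M_gDh_i^g$. On the $I$-steps, write $\frac{\varepsilon}{2}h_i^F=\varepsilon h_i^F-\frac{\varepsilon}{2}h_i^F$. After applying monotonicity this gives
\[
\Big(\sum_{i\in I}h_i^F\Big)\langle F(x),\widehat x-x\rangle<\varepsilon\sum_{i\in I}h_i^F+R^2-\frac{\varepsilon^2}{2}\sum_{i=0}^{k-1}\frac{1}{M_i^2}+M_gD\varepsilon\sum_{i\in J}\frac{1}{\|\nabla g(x_i)\|_*^2},
\]
from which both stopping criteria give exactly the stated bounds. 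Your summation line is this inequality with the term $\varepsilon\sum_{i\in I}h_i^F$ dropped, and that term is precisely where the $\varepsilon$ in the theorem comes from.

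Your Jensen argument for $g(\widehat x)\le\varepsilon$ is correct. Your proof that $I\neq\emptyset$ is also correct; it already uses $g(x_i)-g(x_*)>\varepsilon$, and it settles a point the paper leaves implicit.
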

\begin{proof}
The proof is in the Appendix. See Subsec. \ref{append_proof_theo_alg2}. 
\end{proof}

\begin{remark}
Since $M_i \leq \max \left\{ M_g, L_F \right\}$ for every $i = 0, 1, \ldots, k - 1$, then from the stopping criterion 2 of Algorithm \ref{alg_2}, we find
$$
    \frac{\varepsilon^2}{2} \sum_{i = 0}^{k -1}  \frac{1}{M_i^2}\geq \frac{\varepsilon^2}{2} \sum_{i = 0}^{k -1}  \frac{1}{\max \left\{ M_g^2, L_F^2 \right\}} = \frac{\varepsilon^2 k}{2 \max \left\{ M_g^2, L_F^2 \right\}} \geq R^2. 
$$
Thus, after no more than $k = \left\lceil \frac{2 R^2 \max\left\{L_F^2, M_g^2\right\}}{\varepsilon^2}  \right\rceil = \mathcal{O}\left(\frac{1}{\varepsilon^2}\right)$ iterations of Algorithm \ref{alg_2}, with the second stooping criterion, we get \eqref{feasibility_g_alg2} \eqref{alg2stop2_quality_F}. 
\end{remark}

\subsection{Analysis of Algorithm 3}

For Algorithm 3, we have the following result. 
\begin{theorem}\label{main_theorem_alg3}
By Algorithm 3, we get a point $\widehat{x}= \frac{1}{\sum_{i \in I} h_i^F} \sum_{i \in I} h_i^F x_i $, such that 
\begin{equation}\label{feasibility_g_alg3}
    g(\widehat{x}) \leq \varepsilon M_g,
\end{equation}
and 
\begin{enumerate}
\item with stopping criterion 1, we get 
\begin{equation}\label{alg3stop1_quality_F}
    \left\langle F(x), \widehat{x} - x \right\rangle < \varepsilon \quad \forall x \in Q,
\end{equation}

\item with stopping criterion 2, we get 
\begin{equation}\label{alg3stop2_quality_F}
    \left\langle F(x), \widehat{x} - x \right\rangle < \varepsilon + D|J| \left(\sum_{i \in I} \frac{1}{\|F(x_i)\|_*^2}\right)^{-1} \quad \forall x \in Q. 
\end{equation}
\end{enumerate}
\end{theorem}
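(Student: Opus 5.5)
Apply Lemma~\ref{fundamental_lemma_deter_case} at every iteration and telescope. In Algorithm 3 the productive steps use $h_i^F=\varepsilon/\|F(x_i)\|_*^2$ and the non-productive steps use $h_i^g=\varepsilon/M_g$. Note that the lemma, read in its second inequality, only needs $x_{i+1}=\operatorname{Mirr}_{x_i}(h p)$ for some vector $p$; it does not require $p$ to be a subgradient. So we may apply it with $p=F(x_i)$ for productive steps and $p=\nabla g(x_i)$ for non-productive steps.

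\textbf{Per-step bounds.} Fix any $x\in Q$. For $i\in I$ the lemma gives
\[
h_i^F\langle F(x_i),x_i-x\rangle\le \tfrac{\varepsilon^2}{2\|F(x_i)\|_*^2}+V(x,x_i)-V(x,x_{i+1}).
\]
For $i\in J$ it gives
\[
h_i^g\langle\nabla g(x_i),x_i-x\rangle\le \tfrac{\varepsilon^2}{2M_g^2}\|\nabla g(x_i)\|_*^2+V(x,x_i)-V(x,x_{i+1})\le \tfrac{\varepsilon^2}{2}+V(x,x_i)-V(x,x_{i+1}).
\]

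\textbf{Summing.} Sum over all $i=0,\dots,k-1$, telescope, and use $V(x,x_0)\le R^2$ together with $V\ge 0$. This gives
\[
\sum_{i\in I}h_i^F\langle F(x_i),x_i-x\rangle+\sum_{i\in J}\tfrac{\varepsilon}{M_g}\langle\nabla g(x_i),x_i-x\rangle\le R^2+\tfrac{\varepsilon^2}{2}\sum_{i\in I}\tfrac{1}{\|F(x_i)\|_*^2}+\tfrac{\varepsilon^2}{2}|J|.
\]
By monotonicity, $\langle F(x_i),x_i-x\rangle\ge\langle F(x),x_i-x\rangle$. Hence the first sum is at least $\big(\sum_{i\in I}h_i^F\big)\langle F(x),\widehat x-x\rangle$, by linearity and the definition of $\widehat x$.

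\textbf{Bounding the constraint sum.} For the second sum I would use the bound $\langle\nabla g(x_i),x_i-x\rangle\ge -\|\nabla g(x_i)\|_*\|x_i-x\|\ge -M_gD$. Then $\sum_{i\in J}\frac{\varepsilon}{M_g}\langle\nabla g(x_i),x_i-x\rangle\ge-\varepsilon D|J|$. This crude bound is what produces the $D$-terms in the two stopping rules, and it holds for every $x\in Q$, feasible or not. That matters because the claimed inequality must hold for all $x\in Q$, not only for feasible $x$. Consequently,
\[
\Big(\sum_{i\in I}h_i^F\Big)\langle F(x),\widehat x-x\rangle\le R^2+\tfrac{\varepsilon^2}{2}\sum_{i\in I}\tfrac{1}{\|F(x_i)\|_*^2}+\tfrac{\varepsilon^2}{2}|J|+\varepsilon D|J|.
\]

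\textbf{The two stopping criteria.} Under stopping criterion 1, $R^2\le \frac{\varepsilon^2}{2}\sum_{I}\|F(x_i)\|_*^{-2}+\frac{\varepsilon^2}{2}|J|-\varepsilon D|J|$. The right-hand side above is then at most $\varepsilon^2\sum_I\|F(x_i)\|_*^{-2}+\varepsilon^2|J|$. This is where I expect the main obstacle: this bound carries an extra $\varepsilon^2|J|$ on top of $\varepsilon\sum_I h_i^F=\varepsilon^2\sum_I\|F(x_i)\|_*^{-2}$, so dividing by $\sum_I h_i^F$ does not immediately give $\varepsilon$.

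To handle it, I would first try to use non-productivity more cleverly, rather than the crude $-M_gD$ bound. For $i\in J$ we have $g(x_i)>\varepsilon M_g$, and convexity gives $\langle\nabla g(x_i),x_i-x\rangle\ge g(x_i)-g(x)$. So for feasible-type $x$ each such term exceeds $\varepsilon M_g$, which absorbs the $\frac{\varepsilon^2}{2}|J|$ terms exactly. For general $x$ I would combine the two estimates, or argue by contradiction: assume $\langle F(x),\widehat x-x\rangle\ge\varepsilon$ and show this contradicts the stopping inequality. The strict inequality ``$<\varepsilon$'' in the statement suggests precisely such a contradiction argument, after matching the constants in the criteria. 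Under stopping criterion 2 the same chain of estimates leaves an excess of exactly $\varepsilon D|J|$. Dividing by $\sum_I h_i^F=\varepsilon\sum_I\|F(x_i)\|_*^{-2}$ then yields the additional term $D|J|\big(\sum_I\|F(x_i)\|_*^{-2}\big)^{-1}$.

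\textbf{Feasibility.} Each $i\in I$ satisfies $g(x_i)\le\varepsilon M_g$. Since $g$ is convex and $\widehat x$ is a convex combination of these points, $g(\widehat x)\le\varepsilon M_g$. Finally, one must check that $I\neq\emptyset$ when the algorithm stops. If $I=\emptyset$, the stopping inequality together with the $J$-estimate (using $g(x_i)-g(x^*)>\varepsilon M_g$ at a solution $x^*$, whose existence follows from compactness of $Q$ and continuity of $F$) would force $R^2<R^2$, a contradiction.
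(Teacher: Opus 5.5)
\textbf{There is a genuine gap.} It sits at the obstacle you flag but do not resolve. Your bound $\langle\nabla g(x_i),x_i-x\rangle\ge -M_gD$ discards the information $g(x_i)>\varepsilon M_g$. As a result, the non-productive steps contribute $+\frac{\varepsilon^2}{2}|J|+\varepsilon D|J|$ to the right-hand side. Both stopping rules of Algorithm 3, however, are calibrated to a contribution of $-\frac{\varepsilon^2}{2}|J|+\varepsilon D|J|$.

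With your chain, criterion 1 leaves an excess of $\varepsilon^2|J|$. Criterion 2 leaves an excess of $\varepsilon^2|J|+\varepsilon D|J|$, not ``exactly $\varepsilon D|J|$'' as you claim. So neither item follows, and ``combine the two estimates, or argue by contradiction'' does not supply the missing bound.

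The idea you need, which is what the paper does, is to pass through a feasible point $x_*$ with $g(x_*)\le 0$. For every $x\in Q$ and $i\in J$,
\[
\langle\nabla g(x_i),x_i-x\rangle\ge g(x_i)-g(x)=\bigl(g(x_i)-g(x_*)\bigr)+\bigl(g(x_*)-g(x)\bigr)>\varepsilon M_g-M_gD,
\]
since $g(x_i)-g(x_*)\ge g(x_i)>\varepsilon M_g$ and $g(x)-g(x_*)\le\langle\nabla g(x),x-x_*\rangle\le M_gD$. Multiplying by $h_i^g=\varepsilon/M_g$ gives $h_i^g\langle\nabla g(x_i),x_i-x\rangle>\varepsilon^2-\varepsilon D$ for all $x\in Q$. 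This absorbs the $\frac{\varepsilon^2}{2}$ coming from the lemma at each non-productive step and costs only $\varepsilon D$. Summing then yields
\[
\Bigl(\sum_{i\in I}h_i^F\Bigr)\langle F(x),\widehat x-x\rangle<\varepsilon\sum_{i\in I}h_i^F-\Bigl(\frac{\varepsilon^2}{2}\sum_{i\in I}\frac{1}{\|F(x_i)\|_*^2}+\frac{\varepsilon^2}{2}|J|-\varepsilon D|J|\Bigr)+R^2 .
\]
Criterion 1 now gives ``$<\varepsilon$'' immediately. Criterion 2 leaves precisely $\varepsilon D|J|$, which becomes $D|J|\bigl(\sum_{i\in I}\|F(x_i)\|_*^{-2}\bigr)^{-1}$ after dividing by $\sum_{i\in I}h_i^F$.

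\textbf{Smaller remarks.}
\begin{enumerate}
\item The strictness of the inequalities comes from the strict inequality $g(x_i)>\varepsilon M_g$ on non-productive steps, not from a contradiction argument. If $J=\emptyset$, this argument, like the paper's, only yields a non-strict inequality.
\item Your feasibility argument matches the paper's.
\item Your check that $I\neq\emptyset$ is a correct addition that the paper omits. Taking $x=x_*$ in the non-productive estimate gives $\frac{\varepsilon^2}{2}|J|<R^2$, which is incompatible with either criterion when $I=\emptyset$.
\item Both that check and the main estimate need only a point $x_*\in Q$ with $g(x_*)\le 0$. The existence of such a point does not follow from compactness of $Q$ and continuity of $F$; it is an implicit feasibility assumption of the problem.
\end{enumerate}
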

\begin{proof}
The proof is in the Appendix. See Subsec. \ref{append_proof_theo_alg3}. 
\end{proof}

\begin{remark}
Since $\|F(x_i)\|_* \leq L_F$ for every $i \in I$, then from the stopping criterion 2 of Algorithm 3, we find
$$
    |J| + \sum_{i \in I} \frac{1}{\|F(x_i)\|_*^2} \geq |J| + \frac{|I|}{L_F^2} \geq \frac{|I| + |J|}{\max\{1 , L_F^2\}}. 
$$
Thus, after no more than $k = \left\lceil \frac{2 R^2 \max\left\{1, L_F^2\right\}}{\varepsilon^2}  \right\rceil = \mathcal{O}\left(\frac{1}{\varepsilon^2}\right)$ iterations of Algorithm 3, with the second stooping criterion, we have \eqref{feasibility_g_alg3} and \eqref{alg3stop2_quality_F}. 
\end{remark}

\subsection{Analysis of Algorithm 4}

For Algorithm 4, we have the following result. 

\begin{theorem}\label{main_theorem_alg4}
By Algorithm 4, we get a point $\widehat{x}= \frac{1}{\sum_{i \in I} h_i^F} \sum_{i \in I} h_i^F x_i $, such that \begin{equation}\label{feasibility_g_alg4}
    g(\widehat{x}) \leq \varepsilon,
\end{equation}
and 
\begin{enumerate}
\item with stopping criterion 1, we get 
\begin{equation}\label{alg4stop1_quality_F}
    \left\langle F(x), \widehat{x} - x \right\rangle < \varepsilon L_F \quad \forall x \in Q,
\end{equation}

\item with stopping criterion 2, we get 
\begin{equation}\label{alg4stop2_quality_F}
    \left\langle F(x), \widehat{x} - x \right\rangle < \varepsilon L_F + \frac{M_g D L_F}{|I|} \sum_{i \in I} \frac{1}{\|\nabla g(x_i)\|_*^2} \quad \forall x \in Q . 
\end{equation}
\end{enumerate}
\end{theorem}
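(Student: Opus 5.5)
The plan follows the standard mirror-descent telescoping argument, adapted to the step sizes of Algorithm 4: $h_k^F = \varepsilon/\|F(x_k)\|_*$ on productive steps and $h_k^g = \varepsilon/\|\nabla g(x_k)\|_*^2$ on non-productive steps.

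\textbf{Feasibility.} Inequality \eqref{feasibility_g_alg4} is immediate. By definition of $I$, every $x_i$ with $i\in I$ satisfies $g(x_i)\le\varepsilon$, and $\widehat{x}$ is a convex combination of these points. Convexity of $g$ then gives $g(\widehat{x})\le \sum_{i\in I} h_i^F g(x_i)/\sum_{i\in I}h_i^F\le\varepsilon$.

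\textbf{One-step inequality.} The second inequality in Lemma \ref{fundamental_lemma_deter_case} only uses the optimality condition of $\operatorname{Mirr}$, so it holds for any vector $p\in\mathbf{E}^*$ in place of $\nabla f(y)$. On productive steps $i\in I$ this gives
$$
h_i^F\langle F(x_i),x_i-x\rangle\le \tfrac{\varepsilon^2}{2}+V(x,x_i)-V(x,x_{i+1}),
$$
because $(h_i^F)^2\|F(x_i)\|_*^2=\varepsilon^2$. On non-productive steps $i\in J$ it gives
$$
h_i^g\langle \nabla g(x_i),x_i-x\rangle\le \tfrac{\varepsilon^2}{2\|\nabla g(x_i)\|_*^2}+V(x,x_i)-V(x,x_{i+1}).
$$
Summing over $i=0,\dots,k-1$ and telescoping bounds the total Bregman term by $V(x,x_0)\le R^2$. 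For the productive part I use monotonicity \eqref{eq:CondMonotone}: $\langle F(x_i),x_i-x\rangle\ge\langle F(x),x_i-x\rangle$. Hence the $I$-sum is at least $\big(\sum_{i\in I}h_i^F\big)\langle F(x),\widehat{x}-x\rangle$.

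\textbf{Non-productive terms (main obstacle).} These must be moved to the right-hand side and bounded above for an \emph{arbitrary} $x\in Q$, in a way that matches the shape of the stopping criteria. I would use two ingredients. The first is the lower bound $h_i^g\langle\nabla g(x_i),x_i-x\rangle\ge h_i^g(g(x_i)-g(x))$, together with $g(x_i)>\varepsilon$. This contributes a gain of order $\varepsilon^2/\|\nabla g(x_i)\|_*^2$, which absorbs the matching $\tfrac{\varepsilon^2}{2}\sum_{i\in J}\|\nabla g(x_i)\|_*^{-2}$ term. The second is the crude bound
$$
-h_i^g\langle\nabla g(x_i),x_i-x\rangle\le h_i^g\|\nabla g(x_i)\|_*D\le \varepsilon M_gD/\|\nabla g(x_i)\|_*^2,
$$
which uses $\|\nabla g(x_i)\|_*\le M_g$ and covers the part that cannot be controlled by the sign of $g$. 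This second ingredient is where the term $\varepsilon M_gD\sum_{i\in J}\|\nabla g(x_i)\|_*^{-2}$ in stopping criterion 1 comes from. The bookkeeping here is delicate: one must make sure that the $\tfrac{\varepsilon^2}{2}$ terms and the $D$ terms combine exactly as in the two stopping rules. I would first check the algebra against the analogous computation for Algorithm \ref{alg_1}, where $h^g=\varepsilon/M_g^2$ produces the term $\varepsilon D|J|/M_g$. The expected outcome is
$$
\Big(\sum_{i\in I}h_i^F\Big)\langle F(x),\widehat{x}-x\rangle< \varepsilon^2|I|
$$
under criterion 1, and the same bound plus $\varepsilon M_gD\sum_{i\in J}\|\nabla g(x_i)\|_*^{-2}$ under criterion 2.

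\textbf{Conclusion.} Finally, I divide by $\sum_{i\in I}h_i^F=\varepsilon\sum_{i\in I}\|F(x_i)\|_*^{-1}\ge \varepsilon|I|/L_F$, using the boundedness \eqref{bounded_cond}. Under criterion 1 this turns $\varepsilon^2|I|$ into $\varepsilon L_F$, which gives \eqref{alg4stop1_quality_F}. Under criterion 2 the extra term becomes $\frac{M_gDL_F}{|I|}\sum_{i\in J}\|\nabla g(x_i)\|_*^{-2}$, which gives \eqref{alg4stop2_quality_F}. Note that this sum runs over the non-productive indices $J$, since only those steps produce $\nabla g$ terms.
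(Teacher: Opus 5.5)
Your feasibility argument, the one-step inequalities, the telescoping, the use of monotonicity, and the final division by $\sum_{i\in I}h_i^F\ge\varepsilon|I|/L_F$ all match the paper. You are also right that the extra term in the second bound must be a sum over $J$: the paper's proof also ends with $\sum_{i\in J}\|\nabla g(x_i)\|_*^{-2}$, so the $\sum_{i\in I}$ in the statement is a typo.

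The gap is the step you label the main obstacle. You describe it but never carry it out, and the two ingredients you name do not produce your ``expected outcome''. Both are bounds on the \emph{same} quantity $-h_i^g\langle\nabla g(x_i),x_i-x\rangle$, so they cannot simply be added.
The first gives $-h_i^g\langle\nabla g(x_i),x_i-x\rangle<-\varepsilon h_i^g+h_i^g\,g(x)$. This leaves $h_i^g\,g(x)$ uncontrolled when $x\in Q$ is infeasible.
The second, used alone, throws away the gain. With $S_J=\sum_{i\in J}\|\nabla g(x_i)\|_*^{-2}$ you get $\bigl(\sum_{i\in I}h_i^F\bigr)\langle F(x),\widehat{x}-x\rangle\le\frac{\varepsilon^2}{2}|I|+\bigl(\frac{\varepsilon^2}{2}+\varepsilon M_gD\bigr)S_J+R^2$. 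Stopping criterion 1 then only gives $\varepsilon^2|I|+\varepsilon^2S_J$, not $\varepsilon^2|I|$.
A case split on the sign of $g(x)$ does not help either. For infeasible $x$ you need the gain $-\varepsilon h_i^g$ and the penalty $+M_gDh_i^g$ at the same time.

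The missing idea is a feasible reference point. The paper takes a solution $x_*$, so $g(x_*)\le 0$, and splits $g(x_i)-g(x)=\bigl(g(x_i)-g(x_*)\bigr)-\bigl(g(x)-g(x_*)\bigr)$.
The first bracket exceeds $\varepsilon$ because $g(x_i)>\varepsilon$ and $g(x_*)\le 0$.
The second is at most $\langle\nabla g(x),x-x_*\rangle\le M_gD$, by convexity and Cauchy--Schwarz.
This gives $-h_i^g\bigl(g(x_i)-g(x)\bigr)<h_i^g(M_gD-\varepsilon)$. Summed over $J$, this is $-\varepsilon^2S_J+\varepsilon M_gDS_J$, which combines with the two stopping rules exactly as you anticipated.

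In your inner-product form, the repair is to write $\langle\nabla g(x_i),x_i-x\rangle=\langle\nabla g(x_i),x_i-x_*\rangle+\langle\nabla g(x_i),x_*-x\rangle$. Apply convexity (your first ingredient) to the first piece and your Cauchy--Schwarz bound (your second ingredient) to the second piece only. Either way, the argument needs a feasible point in $Q$, which your proposal never invokes.
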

\begin{proof}
The proof is in the Appendix. See Subsec. \ref{append_proof_theo_alg4}. 
\end{proof}

\begin{remark}
Since $\|\nabla g(x_i)\|_* \leq M_g$ for every $i \in J$, then from the stopping criterion 2 of Algorithm 4, we find
$$
    |I| + \sum_{i \in J} \frac{1}{\|\nabla g(x_i)\|_*^2} \geq |I| + \frac{|J|}{M_g^2} \geq \frac{|I| + |J|}{\max\{1 ,M_g^2\}}. 
$$
Thus, after no more than $k = \left\lceil \frac{2 R^2 \max\left\{1, M_g^2\right\}}{\varepsilon^2}  \right\rceil = \mathcal{O}\left(\frac{1}{\varepsilon^2}\right)$ iterations of Algorithm 4, with the second stooping criterion, we have \eqref{feasibility_g_alg4} and \eqref{alg4stop2_quality_F}. 
\end{remark}

\subsection{Analysis of Algorithm 5}

For Algorithm 5, we have the following result. 
\begin{theorem}\label{main_theorem_alg5}
By Algorithm 5, we get a point $\widehat{x}= \frac{1}{\sum_{i \in I} h_i^F} \sum_{i \in I} h_i^F x_i $, such that 
\begin{equation}\label{feasibility_g_alg5}
    g(\widehat{x}) \leq \varepsilon M_g,
\end{equation}
and 
\begin{enumerate}
\item with stopping criterion 1, we get 
\begin{equation}\label{alg5stop1_quality_F}
    \left\langle F(x), \widehat{x} - x \right\rangle < \varepsilon L_F \quad \forall x \in Q,
\end{equation}

\item with stopping criterion 2, we get 
\begin{equation}\label{alg5stop2_quality_F}
    \left\langle F(x), \widehat{x} - x \right\rangle < \varepsilon L_F + \frac{ D L_F |J|}{|I|} \quad \forall x \in Q. 
\end{equation}
\end{enumerate}
\end{theorem}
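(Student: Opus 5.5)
Algorithm 5 uses the threshold $g(x_k)\le \varepsilon M_g$, productive steps $h_k^F=\varepsilon/\|F(x_k)\|_*$ and non-productive steps $h_k^g=\varepsilon/M_g$. The plan is to apply Lemma \ref{fundamental_lemma_deter_case} to each step, sum over all steps, and then split the sum according to $I$ and $J$.

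\textbf{Productive steps.} The lemma holds verbatim with the linear function $u\mapsto\langle F(x_i),u\rangle$ in place of $f$, since it only uses the prox-mapping optimality condition. For $i\in I$ and any $x\in Q$ this gives
$$
h_i^F\langle F(x_i),x_i-x\rangle\le \tfrac{(h_i^F)^2}{2}\|F(x_i)\|_*^2+V(x,x_i)-V(x,x_{i+1}) = \tfrac{\varepsilon^2}{2}+V(x,x_i)-V(x,x_{i+1}).
$$

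\textbf{Non-productive steps.} For $i\in J$, the lemma with $f=g$ gives
$$
h^g\bigl(g(x_i)-g(x)\bigr)\le \tfrac{(h^g)^2}{2}\|\nabla g(x_i)\|_*^2+V(x,x_i)-V(x,x_{i+1}) \le \tfrac{\varepsilon^2}{2}+V(x,x_i)-V(x,x_{i+1}).
$$
Here I use $\|\nabla g(x_i)\|_*\le M_g$. The left side is not signed for arbitrary $x$, so I bound it via the Lipschitz property: $g(x_i)-g(x)\ge -M_g D$. Hence $h^g(g(x_i)-g(x))\ge -\varepsilon D$.

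\textbf{Summation.} Summing over all $k$ steps telescopes the Bregman terms to at most $V(x,x_0)\le R^2$. Using monotonicity, $\langle F(x_i),x_i-x\rangle\ge \langle F(x),x_i-x\rangle$, together with convexity/linearity in $\widehat x$, I get
$$
\Bigl(\sum_{i\in I}h_i^F\Bigr)\langle F(x),\widehat x-x\rangle \le R^2-\tfrac{\varepsilon^2}{2}(|I|+|J|)+\varepsilon D|J|+\varepsilon^2|I|.
$$
Keeping the productive contribution $\tfrac{\varepsilon^2}{2}|I|$ separate is the bookkeeping to watch; in the end it gives the $\varepsilon$-term. Since $\|F(x_i)\|_*\le L_F$, we have $\sum_{i\in I}h_i^F\ge \varepsilon|I|/L_F$.

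\textbf{Stopping criterion 1.} It gives $R^2-\tfrac{\varepsilon^2}{2}(|I|+|J|)+\varepsilon D|J|\le 0$. Then
$$
\langle F(x),\widehat x-x\rangle\le \frac{\varepsilon^2|I|/2}{\varepsilon|I|/L_F}\le \frac{\varepsilon L_F}{2}<\varepsilon L_F,
$$
where the strict inequality comes from the slack. \textbf{Stopping criterion 2.} The leftover term is $\varepsilon D|J|$, which after division contributes $D L_F|J|/|I|$. This is exactly \eqref{alg5stop2_quality_F}.

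\textbf{Feasibility.} I need $I\neq\emptyset$. If $I=\emptyset$, the summed inequality at $x=x^*$ with $g(x^*)\le0$ gives
$$
h^g\sum_{i\in J}\bigl(g(x_i)-g(x^*)\bigr)>h^g\,\varepsilon M_g|J|=\varepsilon^2|J|.
$$
This contradicts the right-hand side bound $R^2+\tfrac{\varepsilon^2}{2}|J|\le \varepsilon^2|J|$, which follows from the stopping rule (criterion 2, which is implied by criterion 1). Then convexity of $g$ and $g(x_i)\le\varepsilon M_g$ for $i\in I$ give $g(\widehat x)\le\varepsilon M_g$.

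The main obstacle is this nonemptiness argument: it requires a feasible $x^*$, so I use existence of a solution to \eqref{main_problem}. Everything else is routine telescoping.
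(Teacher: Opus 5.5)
\textbf{Gap in the summation step.} Your summation step does not follow from the bound you derive for non-productive steps. For $i\in J$ you only use $g(x_i)-g(x)\ge -M_gD$, i.e.\ $h^g\bigl(g(x_i)-g(x)\bigr)\ge-\varepsilon D$. Inserting this into the summed inequality gives
\[
\Bigl(\sum_{i\in I}h_i^F\Bigr)\langle F(x),\widehat x-x\rangle\le R^2+\tfrac{\varepsilon^2}{2}|I|+\tfrac{\varepsilon^2}{2}|J|+\varepsilon D|J|,
\]
with $+\frac{\varepsilon^2}{2}|J|$ rather than the $-\frac{\varepsilon^2}{2}|J|$ in your display. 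The stopping rules only bound $R^2$ by $\frac{\varepsilon^2}{2}(|I|+|J|)$ (minus $\varepsilon D|J|$ in criterion 1). So a residual $\varepsilon^2|J|$ survives beyond the $\varepsilon^2|I|$ you want, and criterion 1 then yields only $\langle F(x),\widehat x-x\rangle\le\varepsilon L_F(1+|J|/|I|)$.

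The missing ingredient is to use non-productivity in the main estimate, as the paper does. With a feasible $x_*$ ($g(x_*)\le 0$),
\[
g(x_i)-g(x)=\bigl(g(x_i)-g(x_*)\bigr)+\bigl(g(x_*)-g(x)\bigr)>\varepsilon M_g-M_gD ,
\]
so $h^g\bigl(g(x_i)-g(x)\bigr)>\varepsilon^2-\varepsilon D$. Equivalently, $\frac{\varepsilon^2}{2}<V(x,x_i)-V(x,x_{i+1})+\varepsilon D$ for every $i\in J$, and summing this with the productive-step inequalities gives exactly your displayed bound. You already use the margin $g(x_i)-g(x^*)>\varepsilon M_g$ in your argument that $I\neq\emptyset$. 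It has to enter the main estimate too, because the crude Lipschitz bound discards precisely the $\varepsilon M_g$ that the stopping rules are designed around.

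\textbf{Criterion 1 arithmetic.} Your criterion-1 computation is off by a factor of $2$. After discarding $R^2-\frac{\varepsilon^2}{2}(|I|+|J|)+\varepsilon D|J|\le 0$, the remaining term is $\varepsilon^2|I|$, not $\varepsilon^2|I|/2$. Dividing by $\sum_{i\in I}h_i^F\ge\varepsilon|I|/L_F$ therefore gives $\varepsilon L_F$, not $\varepsilon L_F/2$. The strict inequality cannot come from ``slack''; it comes from the strict inequality $g(x_i)>\varepsilon M_g$ on non-productive steps. If $J=\emptyset$ the argument gives only $\le\varepsilon L_F$, a point the paper also passes over.

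\textbf{What is fine.} Applying the lemma to the linear function $u\mapsto\langle F(x_i),u\rangle$ is correct, and the monotonicity/averaging step and the convexity argument for $g(\widehat x)\le\varepsilon M_g$ match the paper. Your proof that $I\neq\emptyset$, via criterion 2 and a feasible $x^*$, is also correct. It supplies something the paper leaves implicit, since otherwise $\widehat x$ is undefined.
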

\begin{proof}
The proof is in the Appendix. See Subsec. \ref{append_proof_theo_alg5}. 
\end{proof}

\begin{remark}
From the stopping criterion 2 of Algorithm 5, we find
that after no more than $k = \left\lceil \frac{2 R^2}{\varepsilon^2}  \right\rceil = \mathcal{O}\left(\frac{1}{\varepsilon^2}\right)$ iterations of Algorithm 5, we have \eqref{feasibility_g_alg5} and \eqref{alg5stop2_quality_F}. 
\end{remark}

\subsection{Analysis of Algorithm 6}

For Algorithm 6, we have the following result. 
\begin{theorem}\label{main_theorem_alg6}
By Algorithm 6, we get a point $\widehat{x}= \frac{1}{\sum_{i \in I} h_i^F} \sum_{i \in I} h_i^F x_i $, such that \begin{equation}\label{feasibility_g_alg6}
    g(\widehat{x}) \leq \varepsilon,
\end{equation}
and 
\begin{enumerate}
\item with stopping criterion 1, we get 
\begin{equation}\label{alg6stop1_quality_F}
    \left\langle F(x), \widehat{x} - x \right\rangle < \frac{\varepsilon L_F}{M_g} \quad \forall x \in Q,
\end{equation}

\item with stopping criterion 2, we get 
\begin{equation}\label{alg6stop2_quality_F}
    \left\langle F(x), \widehat{x} - x \right\rangle < \frac{\varepsilon L_F}{M_g} + \frac{D L_F |J|}{M_g |I|} \quad \forall x \in Q. 
\end{equation}
\end{enumerate}
\end{theorem}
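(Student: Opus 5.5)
The plan is to run the standard switching-subgradient argument. We apply Lemma \ref{fundamental_lemma_deter_case} at every step, in its form for a general direction: if $z=\operatorname{Mirr}_y(hp)$, then $h\langle p, y-x\rangle \le \frac{h^2}{2}\|p\|_*^2 + V(x,y)-V(x,z)$. This holds for any $p\in\mathbf{E}^*$, since its proof uses only the optimality condition of the prox step.

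\textbf{Productive steps.} For $i\in I$ we have $p=F(x_i)$ and $h_i^F=\frac{\varepsilon}{M_g\|F(x_i)\|_*}$. This gives
$$h_i^F\langle F(x_i),x_i-x\rangle \le \frac{\varepsilon^2}{2M_g^2}+V(x,x_i)-V(x,x_{i+1}).$$

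\textbf{Non-productive steps.} For $i\in J$ we have $p=\nabla g(x_i)$ and $h^g=\varepsilon/M_g^2$. Using $\|\nabla g(x_i)\|_*\le M_g$, the quadratic term is at most $\frac{\varepsilon^2}{2M_g^2}$. Convexity gives $\langle \nabla g(x_i),x_i-x\rangle\ge g(x_i)-g(x)$.

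\textbf{Summing.} Summing over $i=0,\dots,k-1$ telescopes the Bregman terms. Since $V(x,x_0)\le R^2$ and $V\ge 0$, we get
$$\sum_{i\in I}h_i^F\langle F(x_i),x_i-x\rangle \le \frac{\varepsilon^2}{2M_g^2}(|I|+|J|)+R^2-\frac{\varepsilon}{M_g^2}\sum_{i\in J}\langle\nabla g(x_i),x_i-x\rangle .$$

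\textbf{Feasibility.} If some step is productive, then $g(x_i)\le\varepsilon$ for every $i\in I$. Since $\widehat x$ is a convex combination of these points, convexity of $g$ gives $g(\widehat x)\le\varepsilon$. We also need $I\neq\emptyset$ at termination. Suppose instead that $I=\emptyset$. Take $x=x^*$, a feasible point, so $g(x^*)\le 0$. Each $j\in J$ then contributes $\langle\nabla g(x_j),x_j-x^*\rangle\ge g(x_j)>\varepsilon$. The summed inequality becomes $0<\frac{\varepsilon^2}{2M_g^2}|J|+R^2-\frac{\varepsilon^2}{M_g^2}|J|$, that is, $R^2>\frac{\varepsilon^2|J|}{2M_g^2}$. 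This contradicts either stopping criterion with $|I|=0$; criterion 1 is even stronger.

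\textbf{Quality with stopping criterion 1.} Monotonicity gives $\langle F(x),x_i-x\rangle\le\langle F(x_i),x_i-x\rangle$. Hence the left-hand side dominates $\big(\sum_{i\in I}h_i^F\big)\langle F(x),\widehat x-x\rangle$ by linearity. On the right-hand side we use the crude bound $|\langle\nabla g(x_i),x_i-x\rangle|\le M_gD$, so the last term is at most $\frac{\varepsilon D|J|}{M_g}$. Stopping criterion 1 says $R^2\le\frac{\varepsilon^2}{2M_g^2}(|I|+|J|)-\frac{\varepsilon D|J|}{M_g}$. The right-hand side is therefore at most $\frac{\varepsilon^2}{M_g^2}(|I|+|J|)-\frac{\varepsilon^2}{2M_g^2}(|I|+|J|)=\frac{\varepsilon^2}{2M_g^2}(|I|+|J|)$. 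I would rather keep the estimate sharp and bound it against the weights. Since $\|F(x_i)\|_*\le L_F$, we have $\sum_{i\in I}h_i^F\ge\frac{\varepsilon|I|}{M_gL_F}$.

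\textbf{Main obstacle.} The crude bound leaves a leftover $\frac{\varepsilon^2|J|}{2M_g^2}$ that is not multiplied by the weights of $I$. Following the pattern of Theorem \ref{main_theorem_alg1}, I expect the intended route is to use criterion 1 to kill $R^2$ together with the $J$-terms. What remains is $\frac{\varepsilon^2|I|}{2M_g^2}$ plus possibly a negative contribution. Dividing by $\sum_{i\in I}h_i^F\ge \frac{\varepsilon|I|}{M_gL_F}$ gives $\langle F(x),\widehat x-x\rangle\le\frac{\varepsilon L_F}{2M_g}<\frac{\varepsilon L_F}{M_g}$. The delicate point is arranging the inequality with the correct sign so that the $|J|$-terms cancel exactly. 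If needed, I would absorb the leftover slack $\frac{\varepsilon^2|J|}{2M_g^2}$ by strengthening the estimate with the strict inequality $g(x_j)>\varepsilon$; this comes at the cost of the factor $2$, which the strict bound $<\frac{\varepsilon L_F}{M_g}$ allows.

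\textbf{Quality with stopping criterion 2.} Criterion 2 only gives $R^2\le\frac{\varepsilon^2}{2M_g^2}(|I|+|J|)$. The extra term $\frac{\varepsilon D|J|}{M_g}$ is therefore not absorbed. Dividing it by $\sum_{i\in I}h_i^F\ge\frac{\varepsilon|I|}{M_gL_F}$ yields $\frac{DL_F|J|}{|I|}$, which I expect to give the stated bound after tracking constants consistently with Theorem \ref{main_theorem_alg5}.
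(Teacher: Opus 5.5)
There is a genuine gap, and it sits exactly where you flag the ``main obstacle''. It is not a matter of arranging signs. The crude bound $|\langle\nabla g(x_i),x_i-x\rangle|\le M_gD$ discards $g(x_i)>\varepsilon$, which is the only information that distinguishes non-productive steps, so the $|J|$-terms cannot cancel. (With that bound, criterion~1 actually leaves $\frac{\varepsilon^2}{M_g^2}(|I|+|J|)$, not $\frac{\varepsilon^2}{2M_g^2}(|I|+|J|)$.) The missing step is the paper's key step, and you already use it correctly in your $I\neq\emptyset$ argument with $x=x_*$: route through the feasible point $x_*$ for \emph{arbitrary} $x\in Q$. For $i\in J$,
\[
\langle\nabla g(x_i),x_i-x\rangle=\langle\nabla g(x_i),x_i-x_*\rangle+\langle\nabla g(x_i),x_*-x\rangle\ge g(x_i)-g(x_*)-M_gD>\varepsilon-M_gD .
\]
The paper instead splits $g(x_i)-g(x)$ through $g(x_*)$ and uses convexity at $x$, to the same effect. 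This gives $-\frac{\varepsilon}{M_g^2}\sum_{i\in J}\langle\nabla g(x_i),x_i-x\rangle<-\frac{\varepsilon^2}{M_g^2}|J|+\frac{\varepsilon D}{M_g}|J|$, and your summed inequality becomes
\[
\Bigl(\sum_{i\in I}h_i^F\Bigr)\langle F(x),\widehat x-x\rangle<\frac{\varepsilon^2}{M_g^2}|I|+R^2-\Bigl(\frac{\varepsilon^2}{2M_g^2}(|I|+|J|)-\frac{\varepsilon D|J|}{M_g}\Bigr).
\]
Criterion~1 removes the last two terms exactly, leaving $\frac{\varepsilon^2}{M_g^2}|I|$ (not $\frac{\varepsilon^2}{2M_g^2}|I|$). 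Dividing by $\sum_{i\in I}h_i^F\ge\frac{\varepsilon|I|}{M_gL_F}$ gives precisely $\frac{\varepsilon L_F}{M_g}$. There is no factor $2$ to spend. Your predicted $\frac{\varepsilon L_F}{2M_g}$ would beat the theorem, which should have been a warning sign.

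For criterion~2, the same fix leaves $\frac{\varepsilon^2}{M_g^2}|I|+\frac{\varepsilon D|J|}{M_g}$; with the crude bound, an extra $\frac{\varepsilon L_F|J|}{M_g|I|}$ would survive. Division gives $\frac{\varepsilon L_F}{M_g}+\frac{DL_F|J|}{|I|}$, which is the term you computed. But this is not the stated $\frac{DL_F|J|}{M_g|I|}$, and no tracking of constants supplies the missing factor $1/M_g$ when $M_g>1$. The paper reaches the stated form only because, in this last division, it takes $h_i^F=\varepsilon/\|F(x_i)\|_*$ (it says so explicitly). That drops the $M_g$ from Algorithm~6's step $\varepsilon/(M_g\|F(x_i)\|_*)$, which the same proof uses in its per-step inequality. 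So instead of expecting the stated bound to appear, record what the argument proves: $\frac{\varepsilon L_F}{M_g}+\frac{DL_F|J|}{|I|}$, which implies the stated estimate only when $M_g\le 1$. Your feasibility argument, including the check that $I\neq\emptyset$ at termination, is correct and covers a point the paper leaves implicit.
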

\begin{proof}
    The proof is in the Appendix. See Subsec. \ref{append_proof_theo_alg6}. 
\end{proof}

\begin{remark}
From the stopping criterion 2 of Algorithm 6, we find
that after no more than $k = \left\lceil \frac{2 R^2 M_g^2}{\varepsilon^2}  \right\rceil = \mathcal{O}\left(\frac{1}{\varepsilon^2}\right)$ iterations of Algorithm 6, we have \eqref{feasibility_g_alg6} and \eqref{alg6stop2_quality_F}. 
\end{remark}

\subsection{Analysis of Algorithm 7}

In Algorithm 7, we mention that the parameter $\theta >0$ is such that $\max\limits_{x, y \in Q}V(x, y) \leq \theta^2$. For Algorithm 7, we have the following result. 
\begin{theorem}\label{main_theorem_alg7}
By Algorithm 7, we get a point $\widehat{x}= \frac{1}{|I|} \sum_{i \in I}x_i $, such that \begin{equation}\label{feasibility_g_alg7}
    g(\widehat{x}) \leq \varepsilon,
\end{equation}
and 
\begin{enumerate}
\item with stopping criterion 1, we get 
\begin{equation}\label{alg7stop1_quality_F}
    \left\langle F(x), \widehat{x} - x \right\rangle < \varepsilon \quad \forall x \in Q,
\end{equation}

\item with stopping criterion 2, we get 
\begin{equation}\label{alg7stop2_quality_F}
    \left\langle F(x), \widehat{x} - x \right\rangle < \varepsilon + \frac{|J| M_g D}{|I|} \quad \forall x \in Q. 
\end{equation}
\end{enumerate}
\end{theorem}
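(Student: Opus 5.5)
We follow the standard switching-subgradient argument and apply Lemma \ref{fundamental_lemma_deter_case} at each iteration with the step $h_k := \theta\big(\sum_{t=0}^{k} M_t^2\big)^{-1/2}$. The lemma is stated for $\nabla f$, but its right-hand inequality only uses the optimality condition of the prox step. It therefore holds verbatim with $\nabla f(y)$ replaced by any vector, in particular by $F(x_k)$. This gives, for every $x\in Q$ and $k\in I$,
\[
h_k\langle F(x_k),x_k-x\rangle\le \tfrac{h_k^2}{2}M_k^2+V(x,x_k)-V(x,x_{k+1}).
\]
For $k\in J$ we use convexity of $g$ and $g(x_k)>\varepsilon$. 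In the version needed for criterion 1 we write
\[
\langle\nabla g(x_k),x_k-x\rangle=\langle \nabla g(x_k),x_k-x\rangle\ge -M_gD,
\]
so the constraint steps contribute a term bounded by $h_kM_gD$. In the version needed for feasibility-type arguments with $x$ feasible we get the stronger $\ge g(x_k)-g(x)>\varepsilon$.

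Since $h_k$ varies, we divide each inequality by $h_k$ before summing, the usual AdaGrad-type trick. This gives
\[
\sum_{k\in I}\langle F(x_k),x_k-x\rangle \le \sum_{k} \tfrac{h_k}{2}M_k^2 + \sum_k \tfrac{1}{h_k}\big(V(x,x_k)-V(x,x_{k+1})\big) + |J|M_gD.
\]
Abel summation, with $1/h_k$ nondecreasing and $V\le\theta^2$, bounds the telescoping part by $\theta^2/h_{k-1}=\theta\big(\sum_t M_t^2\big)^{1/2}$. The standard inequality $\sum_k M_k^2/\sqrt{\sum_{t\le k}M_t^2}\le 2\sqrt{\sum_t M_t^2}$ bounds the first sum by $\theta\big(\sum_t M_t^2\big)^{1/2}$. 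The total is $2\theta\big(\sum M_t^2\big)^{1/2}+|J|M_gD$.

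Monotonicity gives $\langle F(x),x_k-x\rangle\le\langle F(x_k),x_k-x\rangle$. Dividing by $|I|$ and using $\widehat x=\frac1{|I|}\sum_{k\in I}x_k$ yields
\[
\langle F(x),\widehat x-x\rangle\le \frac{2\theta(\sum M_t^2)^{1/2}+|J|M_gD}{|I|}.
\]
The stopping criterion as printed in Table \ref{algs3-7} has $(\sum M_t^2)^{-1/2}$. I expect this must be read so that the quantity above is at most $\varepsilon k$ (it likely should be a $+1/2$ power). With the criterion $k\varepsilon\ge 2\theta(\sum M_t^2)^{1/2}+|J|M_gD$, i.e. criterion 1 scaled appropriately, we need $|I|$ in the denominator rather than $k$.

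Handling that mismatch is the main obstacle. The fix I would try is to show that criterion 1 forces $I\neq\emptyset$, and to run the argument with $x$ a solution-type comparison point. Alternatively, one keeps the $J$-terms with the lower bound $\varepsilon$ and moves them to the left, so the inequality reads
\[
\sum_{k\in I}\langle F(x_k),x_k-x\rangle+\varepsilon|J| <\varepsilon k .
\]
This gives $<\varepsilon|I|$, hence strict $<\varepsilon$ after dividing by $|I|$. For criterion 2 the $|J|M_gD$ term is not absorbed by the criterion and remains, producing the extra $\frac{|J|M_gD}{|I|}$ in \eqref{alg7stop2_quality_F}.

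Feasibility \eqref{feasibility_g_alg7} is immediate: $g(x_k)\le\varepsilon$ for all $k\in I$, and $g$ is convex, so Jensen's inequality gives $g(\widehat x)\le\frac1{|I|}\sum_{k\in I}g(x_k)\le\varepsilon$. It remains to check $I\ne\emptyset$. If $I=\emptyset$, the same summed inequality with $x$ feasible would give $\varepsilon k<\sum_{k\in J}\langle\nabla g(x_k),x_k-x\rangle\le$ (the right side) $\le\varepsilon k$ under the stopping rule, a contradiction. This needs a feasible point, which exists since the problem \eqref{main_problem} is assumed solvable.
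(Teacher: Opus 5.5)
Your skeleton matches the paper's: the prox inequality divided by $h_k$, Abel summation with $V\le\theta^2$, the AdaGrad-type bound, monotonicity, and Jensen for feasibility. You are right that the stopping rule must be read with $\big(\sum_t M_t^2\big)^{+1/2}$. Your argument that $I\neq\emptyset$ is a useful addition the paper omits. The gap is in the non-productive steps, exactly where you locate the ``mismatch''.

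\begin{itemize}
\item Your first route uses only $\langle\nabla g(x_k),x_k-x\rangle\ge -M_gD$. It therefore ends with $k$ instead of $|I|$ in the denominator, which is weaker than the claim.
\item Your second route uses $\langle\nabla g(x_k),x_k-x\rangle\ge g(x_k)-g(x)>\varepsilon$. As you note, this requires $g(x)\le 0$, so it proves $\langle F(x),\widehat x-x\rangle<\varepsilon$ only for feasible $x$, not for all $x\in Q$.
\item In that second route no $|J|M_gD$ term ever appears; for feasible $x$, criterion 2 would give $<\varepsilon$ outright. So your claim that for criterion 2 ``the $|J|M_gD$ term remains'' mixes the two versions without justification.
\end{itemize}

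What is missing is one inequality that lets both terms coexist for arbitrary $x\in Q$. Insert the feasible solution $x_*$ and write, for $k\in J$,
\[
g(x_k)-g(x)=\big(g(x_k)-g(x_*)\big)+\big(g(x_*)-g(x)\big)>\varepsilon-M_gD .
\]
The first bracket exceeds $\varepsilon$ because $g(x_*)\le 0<\varepsilon<g(x_k)$. The second is at least $-M_gD$ by the $M_g$-Lipschitz property (or convexity plus Cauchy--Schwarz, as in the paper) and $\|x-x_*\|\le D$. Summing gives, for every $x\in Q$,
\[
|I|\,\langle F(x),\widehat x-x\rangle<2\theta\Big(\sum_t M_t^2\Big)^{1/2}+|J|M_gD-\varepsilon|J| .
\]
Since $-\varepsilon|J|=\varepsilon|I|-\varepsilon k$, the corrected criterion 1 yields $<\varepsilon|I|$. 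The corrected criterion 2 yields $<\varepsilon|I|+|J|M_gD$. These are exactly the two claimed bounds, and this is how the paper's proof proceeds.

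Your phrase ``run the argument with $x$ a solution-type comparison point'' gestures at this, but setting $x=x_*$ would give the estimate only at that one point. Instead, $x_*$ must serve as an anchor inside the $J$-terms while $x$ stays arbitrary.
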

\begin{proof}
    The proof is in the Appendix. See Subsec. \ref{append_proof_theo_alg7}. 
\end{proof}

\begin{remark}
From the stopping criterion 2 of Algorithm 7, we find
that after no more than $k = \left\lceil \frac{4 \theta \max \left\{L_F^2, M_g^2\right\}}{\varepsilon^2}  \right\rceil = \mathcal{O}\left(\frac{1}{\varepsilon^2}\right)$ iterations of Algorithm 7, we have \eqref{feasibility_g_alg7} and \eqref{alg7stop2_quality_F}. 
\end{remark}

\section{Modification of the Algorithms \ref{alg_1}---7 for VIs with many functional constraints}\label{sect_modifications}

In this section, we propose a modification of the previously proposed Algorithms \ref{alg_1}---7, to solve the problem \eqref{main_problem_1}. The modification was first proposed for the minimization problems in \cite{Stonyakin2018Adaptive}. Instead of transforming all functional constraints into one that equals the maximum of them as in problem \eqref{main_problem}, we consider each constraint $g_i$ in the calculation when we have a productive step, and the first constraint that violates the feasibility. Thus, this proposed modification saves the algorithms' running time by considering not all functional constraints on non-productive steps. 

In the following, we list the modifications of Algorithm \ref{alg_2}, where the same will be for the remaining Algorithms \ref{alg_1} and 3---7. 

The results of Theorems \ref{main_theorem_alg1}--\ref{main_theorem_alg7} extend to all modified versions of Algorithms \ref{alg_1}--7. For brevity, we present the proof only for the modification of Algorithm \ref{alg_2}, as the arguments for the remaining algorithms are analogous.


\setcounter{algorithm}{7}
\begin{algorithm}[H]
\caption{Modification of Algorithm \ref{alg_2} with many functional constraints.}\label{alg_2_modif}
\begin{algorithmic}[1]
\REQUIRE $\varepsilon>0, M_g = \max_{1 \leq i \leq m} \{M_{g_i}\}>0, D>0$ (the diameter of $Q$), $x_0 \in Q^{\circ}$, $R>0$ such that $\max_{x \in Q}V(x, x_0) \leq R^2$. 
\STATE $I=:\emptyset, J : = \emptyset.$
\STATE Set $k = 0.$
\REPEAT
\IF{{\color{black}$g_i(x_k)\leq \varepsilon \; \forall i = 1,2, \ldots, m$}}
\STATE $M_k=\left\|F(x_k)\right\|_*, \;\; h_k^F=\frac{\varepsilon}{M_k^2},\;\;$ \STATE $x_{k+1}=\operatorname{Mirr}_{x_k}\left(h_k^F F(x_k)\right)\;\; $ and add $k$ to $I$,
\ELSE
\STATE (i.e., $\exists N = N(k) \in \{1, 2, \ldots, m\}$, s.t., $g_{N(k)} > \varepsilon$)
\STATE $M_k=\left\|\nabla g_{N(k)}(x_k)\right\|_*, \;\; h_k^{g_{N(k)}}=\frac{\varepsilon}{M_k^2},$
\STATE ${\color{black}x_{k+1}=\operatorname{Mirr}_{x_k}\left(h_k^{g_{N(k)}} \nabla g_{N(k)}(x_k)\right)}\;\;$ and add $k$ to $J$,
\ENDIF
\STATE Set $k:= k+1.$
\UNTIL{
\begin{equation}\label{stop1_alg2_mod}
    \text{Stopping criterion 1:} \quad R^2 \leq \frac{\varepsilon^2}{2} \sum_{i = 0}^{k - 1} \frac{1}{M_i^2} - M_g D\varepsilon \sum_{i \in J} \frac{1}{\|\nabla {\color{black}g_{N(i)}}(x_i)\|_*^2}, 
\end{equation}
or
\begin{equation}\label{stop2_alg2_mod}
    \text{Stopping criterion 2:} \quad R^2 \leq \frac{\varepsilon^2}{2} \sum_{i = 0}^{k - 1} \frac{1}{M_i^2}.  
\end{equation}
}
\end{algorithmic}
\end{algorithm}

For Algorithm \ref{alg_2_modif}, we have the following result. 
\begin{theorem}\label{main_theorem_alg2_modif}
By Algorithm \ref{alg_2_modif}, we get a point $\widehat{x}= \frac{1}{\sum_{i \in I} h_i^F} \sum_{i \in I} h_i^F x_i $, such that, 
\begin{equation*}\label{feasibility_g_alg2_mod}
    g_i(\widehat{x}) \leq \varepsilon, \quad \forall i \in \{1,2,\ldots, m\},
\end{equation*}
and 
\begin{enumerate}
\item with stopping criterion 1 \eqref{stop1_alg2_mod}, we get 
\begin{equation*}\label{alg2_modstop1_quality_F}
    \left\langle F(x), \widehat{x} - x \right\rangle < \varepsilon \quad \forall x \in Q,
\end{equation*}

\item with stopping criterion 2 \eqref{stop2_alg2_mod},
\begin{equation*}
    \left\langle F(x), \widehat{x} - x \right\rangle < \varepsilon +   M_g D\sum_{i \in J} \frac{1}{\|\nabla g_{N(i)}(x_i)\|_*^2} \left( \sum_{i \in I} \frac{1}{\|F(x_i)\|_*^2}  \right)^{-1} \; \forall x \in Q. 
\end{equation*}
\end{enumerate}
\end{theorem}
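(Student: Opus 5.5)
The plan is to follow the standard switching-subgradient argument, applying Lemma \ref{fundamental_lemma_deter_case} step by step with a fixed $x \in Q$.

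\textbf{Per-step bounds.} On a productive step $i\in I$ the Lemma is stated for convex $f$. What it really uses, however, is only the optimality condition of $\operatorname{Mirr}$, so the same estimate holds with any vector $p$ in place of $\nabla f(y)$. With $p=F(x_i)$ and $h_i^F=\varepsilon/\|F(x_i)\|_*^2$ this gives
\[
h_i^F\langle F(x_i),x_i-x\rangle\le \frac{(h_i^F)^2}{2}\|F(x_i)\|_*^2+V(x,x_i)-V(x,x_{i+1})=\frac{\varepsilon^2}{2\|F(x_i)\|_*^2}+V(x,x_i)-V(x,x_{i+1}).
\]
On a non-productive step $i\in J$ we use the violated constraint $g_{N(i)}$. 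By convexity and the Lipschitz property, $g_{N(i)}(x_i)-g_{N(i)}(x)\le\langle\nabla g_{N(i)}(x_i),x_i-x\rangle$. Since $g_{N(i)}(x_i)>\varepsilon$, this gives
\[
\langle\nabla g_{N(i)}(x_i),x_i-x\rangle> \varepsilon - g_{N(i)}(x)\ge \varepsilon - g_{N(i)}(x_i) - M_g\|x_i-x\|\ldots
\]
That route is awkward. Instead I will mirror the paper's criterion-1 structure. I bound $-h_i\langle\nabla g_{N(i)}(x_i),x_i-x\rangle\le h_i M_g D$ by Cauchy--Schwarz, using $\|\nabla g_{N(i)}(x_i)\|_*\le M_{g_{N(i)}}\le M_g$ and $\|x_i-x\|\le D$. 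I then move this term to the right-hand side.

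\textbf{Summation.} Summing over $i=0,\dots,k-1$ telescopes the Bregman terms to at most $V(x,x_0)\le R^2$. Monotonicity gives $\langle F(x),x_i-x\rangle\le\langle F(x_i),x_i-x\rangle$. Hence
\[
\Big(\sum_{i\in I}h_i^F\Big)\langle F(x),\widehat{x}-x\rangle\le R^2+\frac{\varepsilon^2}{2}\sum_{i\in I}\frac1{M_i^2}-\frac{\varepsilon^2}{2}\sum_{i\in J}\frac1{M_i^2}+\varepsilon M_gD\sum_{i\in J}\frac1{\|\nabla g_{N(i)}(x_i)\|_*^2}.
\]
Here I use $h_i=\varepsilon/M_i^2$ and the fact that $\varepsilon^2/(2M_i^2)-h_i\langle\cdot\rangle$ on $J$ is bounded using the strict inequality $g_{N(i)}(x_i)>\varepsilon$.

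Precisely, on $J$ the Lemma gives $h_i\langle \nabla g_{N(i)}(x_i),x_i-x\rangle\le \frac{\varepsilon^2}{2M_i^2}+V(x,x_i)-V(x,x_{i+1})$. I take $x=$ an arbitrary point and write $\varepsilon^2/M_i^2 = h_i\varepsilon$. Combined with the lower bound $\langle\nabla g_{N(i)}(x_i),x_i-x\rangle\ge -M_gD$, this produces exactly the combination in \eqref{stop1_alg2_mod}. Under \eqref{stop1_alg2_mod} the right-hand side is $<\varepsilon\sum_{i\in I}h_i^F$; the strictness comes from $g>\varepsilon$. Dividing gives item 1. Under \eqref{stop2_alg2_mod} the extra $M_gD\varepsilon\sum_J(\cdot)$ term survives. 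Dividing by $\sum_I h_i^F=\varepsilon\sum_I\|F(x_i)\|_*^{-2}$ yields the bound in item 2.

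\textbf{Main obstacle: feasibility and $I\neq\emptyset$.} Feasibility $g_j(\widehat x)\le\varepsilon$ for each $j$ is immediate from convexity of $g_j$: $\widehat x$ is a convex combination of points $x_i$, $i\in I$, each satisfying $g_j(x_i)\le\varepsilon$ for all $j$. The delicate step is showing $I\neq\emptyset$ when the stopping criterion fires, so that $\widehat x$ is defined. I would argue by contradiction. Take $x=x^*$, a feasible solution with $g_{N(i)}(x^*)\le0$. Then each $J$-step yields $h_i\langle\nabla g_{N(i)}(x_i),x_i-x^*\rangle\ge h_i\varepsilon=\varepsilon^2/M_i^2$. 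Summing with $I=\emptyset$ gives $\varepsilon^2\sum 1/M_i^2< R^2+\frac{\varepsilon^2}{2}\sum1/M_i^2$, which contradicts either stopping criterion.
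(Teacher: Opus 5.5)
\textbf{There is a genuine gap in your bound for the non-productive steps, which is the crux of the proof.} On $i\in J$ the lemma gives $h_i\langle\nabla g_{N(i)}(x_i),x_i-x\rangle\le\frac{\varepsilon^2}{2M_i^2}+V(x,x_i)-V(x,x_{i+1})$. The only lower bound you actually feed in is the Cauchy--Schwarz estimate $\langle\nabla g_{N(i)}(x_i),x_i-x\rangle\ge -M_gD$. Summing, this yields
$\bigl(\sum_{i\in I}h_i^F\bigr)\langle F(x),\widehat{x}-x\rangle\le R^2+\frac{\varepsilon^2}{2}\sum_{i\in I}M_i^{-2}+\frac{\varepsilon^2}{2}\sum_{i\in J}M_i^{-2}+\varepsilon M_gD\sum_{i\in J}M_i^{-2}$.
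This has $+\frac{\varepsilon^2}{2}\sum_{J}$ where your display has $-\frac{\varepsilon^2}{2}\sum_{J}$. Under \eqref{stop1_alg2_mod} it gives only $\langle F(x),\widehat{x}-x\rangle\le\varepsilon+\varepsilon\sum_{i\in J}M_i^{-2}\big/\sum_{i\in I}M_i^{-2}$, which is not item 1, and item 2 picks up the same spurious term.

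Flipping the sign requires gaining a full $\varepsilon$ on every non-productive step, i.e.\ $\langle\nabla g_{N(i)}(x_i),x_i-x\rangle>\varepsilon-M_gD$ for every $x\in Q$. Invoking ``the strict inequality $g_{N(i)}(x_i)>\varepsilon$'' does not by itself supply this. Your first attempt stalled precisely because it bounded $g_{N(i)}(x)$ against $g_{N(i)}(x_i)$, which cancels the gain.

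The missing idea, which is what the paper does, is to compare $x$ with a feasible point $x_*$ instead of with $x_i$, so that $g_{N(i)}(x_*)\le g(x_*)\le 0$. By convexity and then Lipschitz continuity,
\[
\langle\nabla g_{N(i)}(x_i),x_i-x\rangle\ge g_{N(i)}(x_i)-g_{N(i)}(x)>\varepsilon-\bigl(g_{N(i)}(x)-g_{N(i)}(x_*)\bigr)\ge\varepsilon-M_g\|x-x_*\|\ge\varepsilon-M_gD .
\]
Multiplying by $h_i=\varepsilon/M_i^2$ gives $-h_i\langle\nabla g_{N(i)}(x_i),x_i-x\rangle<-\varepsilon^2/M_i^2+\varepsilon M_gD/M_i^2$. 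Together with the $+\varepsilon^2/(2M_i^2)$ from the lemma, this produces exactly the combination in \eqref{stop1_alg2_mod}. The rest of your argument then goes through: monotonicity, division by $\sum_{i\in I}h_i^F$, and feasibility via convexity of each $g_j$.

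You already use this ingredient in your $I\neq\emptyset$ argument, but only at $x=x^*$. The main estimate needs it at arbitrary $x\in Q$, and that is where the $M_gD$ term comes from. The $I\neq\emptyset$ argument itself is correct and is a worthwhile addition, since the paper never checks that $\widehat{x}$ is well defined.
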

\begin{proof}
The proof is in the Appendix. See Subsec. \ref{append_proof_theo_modalg2}. 
\end{proof}

\section{Numerical experiments}\label{sect_numerical}

In this section, we mention the results of the conducted experiments for the proposed algorithms in Sect. \ref{sectinAlgs}, for one example of the variational inequality problem (HpHard problem) with functional constraints. 

All experiments were implemented in Python 3.4, on a computer fitted with Intel(R) Core(TM) i7-8550U CPU @ 1.80GHz, 1992 Mhz, 4 Core(s), 8 Logical Processor(s). The RAM of the computer is 8 GB.

Let us consider the problem \eqref{main_problem}, with the following functional constraints, 
\begin{equation}
    g(x) = \max_{1 \leq i \leq m} \left\{g_i(x) = \langle a_i, x \rangle - b_i \right\},
\end{equation}
where $a_i \in \mathbb{R}^n$ and $b_i \in \mathbb{R}$ for any $i \in \{1, \ldots, m\}$. 

The vectors $a_j$ and constants $b_j$, for $ j = 1, 2, \ldots, m$,  are randomly generated from a uniform distribution over $[0, 1)$. In our experiments, we take $m = 10$.

\begin{example}\label{example_2}
In this example, we consider the HpHard (or Harker-Pang) problem \cite{Qu2024extra}. Let $F: \mathbb{R}^n \longrightarrow \mathbb{R}^n$ be an operator defined by
\begin{equation}
    F(x) = Kx + q, \quad K = A A^{\top } + B + C,  q \in \mathbb{R}^n,
\end{equation}
where $A \in \mathbb{R}^{n \times n}$ is a matrix, $B \in \mathbb{R}^{n \times n}$ is a skew-symmetric matrix and $C \in \mathbb{R}^{n \times n}$ is a diagonal matrix with non-negative diagonal entries. Therefore, it follows that $K$ is positive semidefinite. The operator $F$ is bounded in the unit ball and monotone, where $L_F = \|K\|_2 + \|q\|_2$. For $q = \textbf{0} \in \mathbb{R}^n$, the solution of the problem \eqref{main_problem}, is $x^* = \textbf{0} \in \mathbb{R}^n$. 
\end{example}

\begin{figure}[htp]
\centering
\begin{subfigure}{0.49\linewidth}
\includegraphics[width=\linewidth]{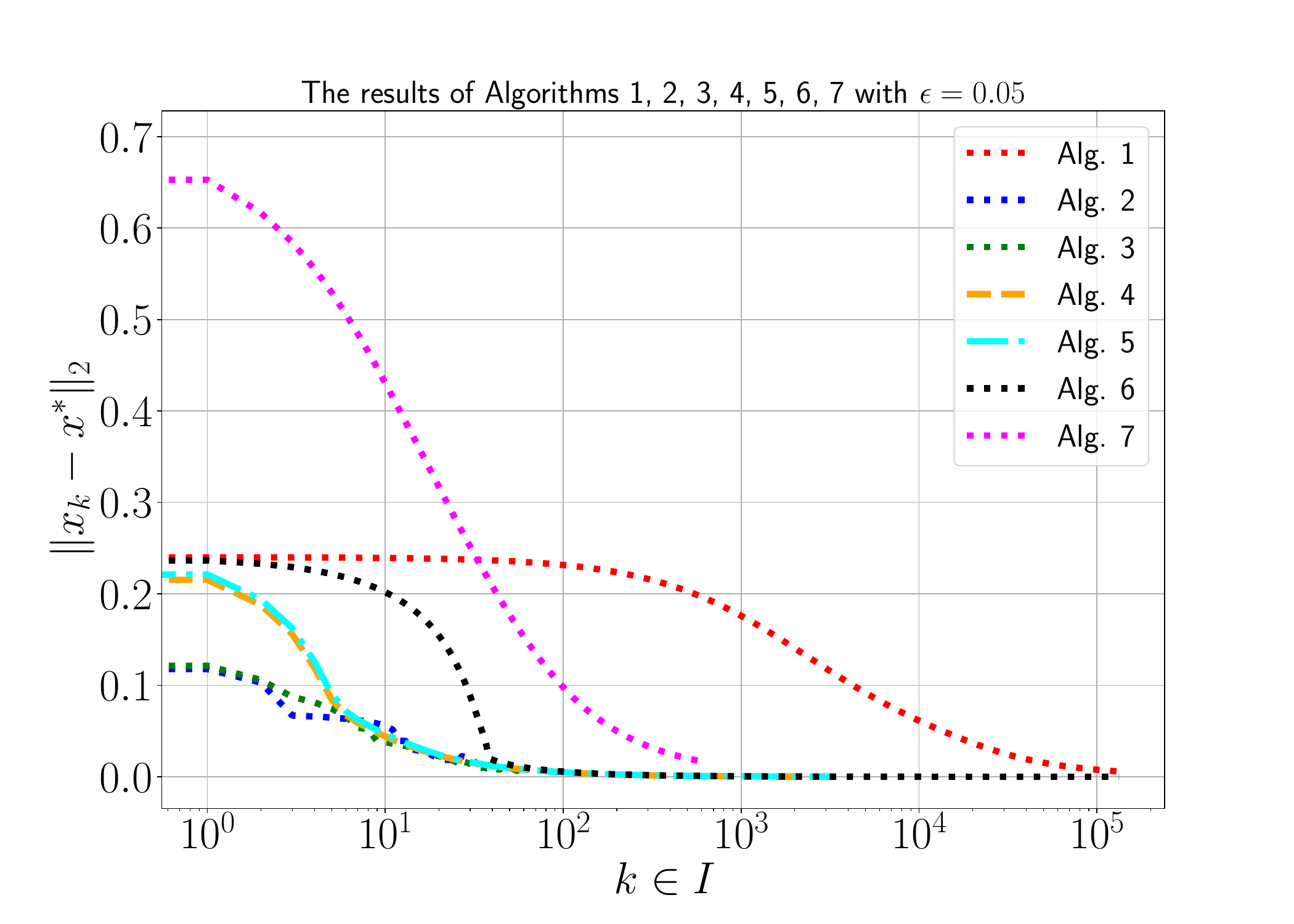}
\end{subfigure}
\begin{subfigure}{0.49\linewidth}
\includegraphics[width=\linewidth]{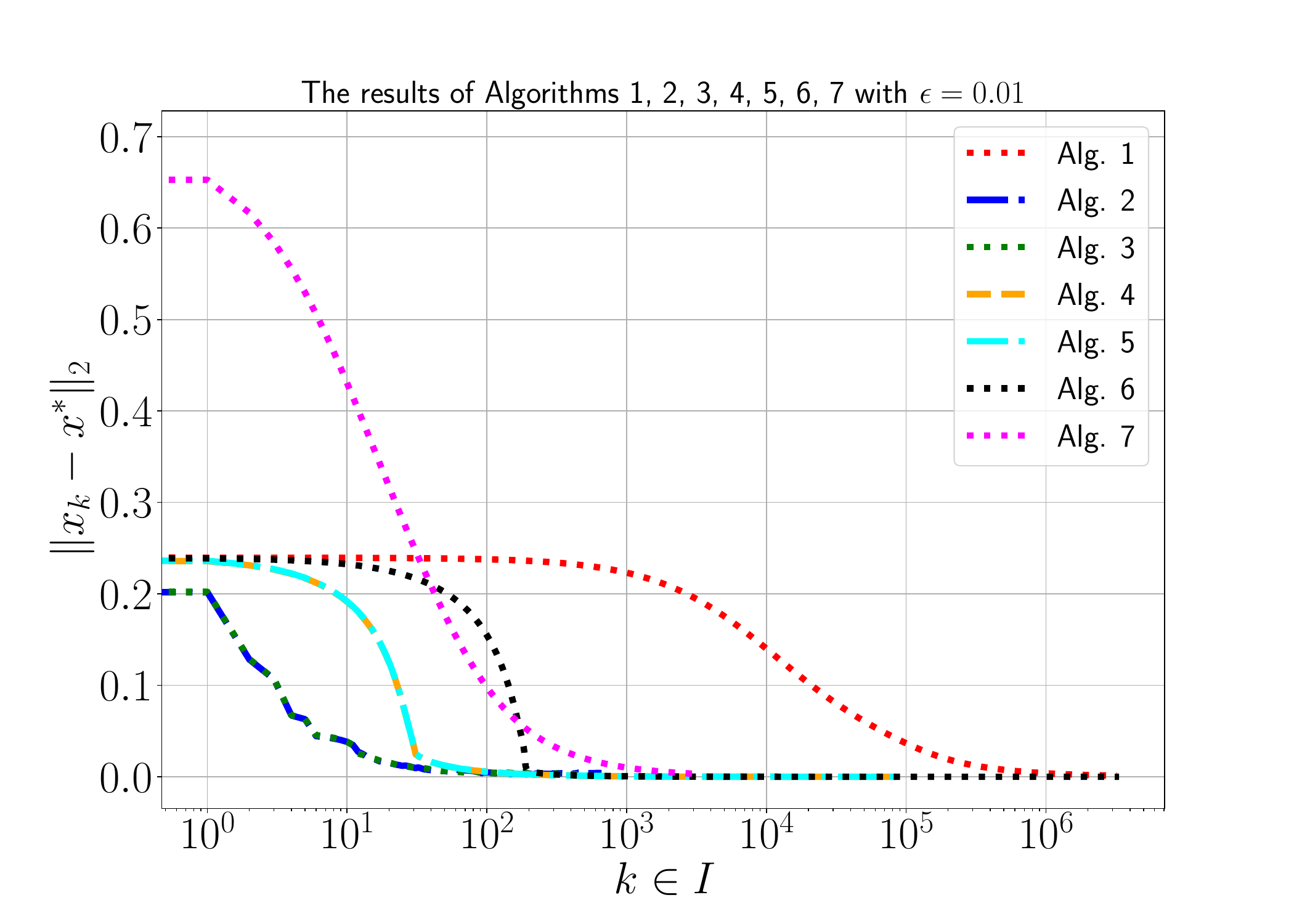}
\end{subfigure}
\caption{ The results of Algorithms \ref{alg_1}---7 (with first stopping criterion), for Example \ref{example_2} with $n = 100, m = 10$,   $ \varepsilon = 0.05$ (left), and $\varepsilon = 0.01$  (right).}
\label{figs:VI2Alg1_7}
\end{figure}

\begin{table}[htp]
\begin{center}
\scalebox{0.78}{
\begin{tabular}{|c|ccccccc|}
\hline
\multirow{2}{*}{} & \multicolumn{1}{c|}{Alg. \ref{alg_1}} & \multicolumn{1}{c|}{Alg. \ref{alg_2}} & \multicolumn{1}{c|}{Alg. 3} & \multicolumn{1}{c|}{Alg. 4} & \multicolumn{1}{c|}{Alg. 5} & \multicolumn{1}{c|}{Alg. 6} & Alg. 7 \\ \cline{2-8} 
& \multicolumn{7}{c|}{$\varepsilon = 0.05$}  \\ \hline
Iters. & \multicolumn{1}{c|}{129005} & \multicolumn{1}{c|}{\textbf{161}} & \multicolumn{1}{c|}{\textbf{80}} & \multicolumn{1}{c|}{3542} & \multicolumn{1}{c|}{3360} & \multicolumn{1}{c|}{133169} &  \textbf{604} \\ \hline
Time& \multicolumn{1}{c|}{18.717003} & \multicolumn{1}{c|}{\textbf{0.021646}} & \multicolumn{1}{c|}{\textbf{0.016614}} & \multicolumn{1}{c|}{0.748257} & \multicolumn{1}{c|}{0.517956} & \multicolumn{1}{c|}{37.257767} & \textbf{0.161242} \\ \hline
Estim. & \multicolumn{1}{c|}{0.05} & \multicolumn{1}{c|}{\textbf{0.05}} & \multicolumn{1}{c|}{\textbf{0.05}} & \multicolumn{1}{c|}{0.306} & \multicolumn{1}{c|}{0.306} & \multicolumn{1}{c|}{0.0492} & \textbf{0.05} \\ \hline
& \multicolumn{7}{c|}{$\varepsilon = 0.01$}  \\ \hline
 Iters.& \multicolumn{1}{c|}{3232248} & \multicolumn{1}{c|}{\textbf{2398}} & \multicolumn{1}{c|}{\textbf{712}} & \multicolumn{1}{c|}{86713} & \multicolumn{1}{c|}{85600} & \multicolumn{1}{c|}{3336676} & \textbf{3020} \\ \hline
Time& \multicolumn{1}{c|}{563.808768} & \multicolumn{1}{c|}{\textbf{0.583022}} & \multicolumn{1}{c|}{\textbf{0.171868}} & \multicolumn{1}{c|}{15.274862} & \multicolumn{1}{c|}{14.675128} & \multicolumn{1}{c|}{758.070035} &  \textbf{0.59175} \\ \hline
Estim.& \multicolumn{1}{c|}{0.01} & \multicolumn{1}{c|}{\textbf{0.01}} & \multicolumn{1}{c|}{\textbf{0.01}} & \multicolumn{1}{c|}{0.6125} & \multicolumn{1}{c|}{0.6125} & \multicolumn{1}{c|}{0.0098} & \textbf{0.01 }\\ \hline
\end{tabular} }
\caption{The results of Algorithms \ref{alg_1}---7 (with first stopping criterion), for Example \ref{example_2} with $L_F \approx 6.125326, M_g \approx 6.22351, n = 100, m = 10$, $ \varepsilon = 0.05, 0.01$. In this table, Iters. denotes the number of iterations of each algorithm, Time denotes the running time of each algorithm in seconds, and Estim. denotes the theoretical estimate of an achieved solution by each algorithm.  }
\label{Table_VI_2}  
\end{center}
\end{table}

\section{Conclusion}\label{sec_consl}

In this paper, we studied the variational inequality problem with inequality functional constraints. To solve this problem, we proposed seven mirror descent-type algorithms and a modification of them with different step size rules and stopping criteria. We provided the analysis of the convergence of all algorithms for the class of problems with bounded and $\delta$-monotone (the monotonicity case corresponds to $\delta = 0$) operators and Lipschitz continuous functional (inequality type) constraints. In the proposed modification, we consider each functional constraint in the calculation when we have a productive step (feasible point), and the first constraint that violates the feasibility.  We proved the optimal convergence rate of the proposed algorithms for the class of problems under consideration. The conducted numerical experiments compare the work of the proposed algorithms for a variational inequality problem (HpHard or Harker-Pang) with functional constraints and for the forsaken game (which is a min-max problem).

\newpage 

\section{Appendix 1: The Proofs of Theorems in Section \ref{section_analysis_algs}}

\subsection{Proof Theorem \ref{main_theorem_alg1}}\label{append_proof_theo_alg1}

\begin{proof}
For any $i \in I$, and $x \in Q$, we have 
\begin{align} \label{alg1_eq1_I}
    h^F \left\langle F(x_i), x_i - x\right \rangle & \leq \frac{(h^F)^2}{2} \|F(x_i)\|_*^2 + V(x, x_i) - V(x, x_{i+1}) \nonumber
    \\& 
    \leq \frac{\varepsilon^2}{2 L_F^2} + V(x, x_i) - V(x, x_{i+1}). 
\end{align}
For any $i \in J$, and $x \in Q$, we have 
\begin{align} \label{alg1_eq1_J}
    h^g \left( g(x_i)  - g(x)\right) & \leq \frac{(h^g)^2}{2} \|\nabla g(x_i)\|_*^2 + V(x, x_i) - V(x, x_{i+1}) \nonumber
    \\& 
    \leq \frac{\varepsilon^2}{2 M_g^2} + V(x, x_i) - V(x, x_{i+1}) .
\end{align}

Summing up inequalities \eqref{alg1_eq1_I} and \eqref{alg1_eq1_J}, from $i = 0$ to $i = k-1$ for any $k \geq 1$, we get the following 
\begin{align*}\label{alg1_eq0}
    h^F  \sum_{i \in I} \left\langle F(x_i), x_i - x\right \rangle + h^g \sum_{i \in J} \left( g(x_i)  - g(x)\right)  & \leq \frac{\varepsilon^2}{2 L_F^2} |I| + \frac{\varepsilon^2}{2 M_g^2} |J|
    \\& \quad + \sum_{i = 0}^{k-1} \left(V(x, x_i) - V(x, x_{i+1})\right) 
    \\& \leq \frac{\varepsilon^2}{2 L_F^2} |I| + \frac{\varepsilon^2}{2 M_g^2} |J| + R^2,
\end{align*}
In the last inequality, we used the fact
\begin{equation*}
    \sum_{i = 0}^{k-1} \left(V(x, x_i) - V(x, x_{i+1})\right) = V(x, x_0) - V(x, x_k) \leq V(x, x_0) \leq R^2. 
\end{equation*}

Thus, we have 
\begin{equation} \label{alg1_eq1}
     h^F \sum_{i \in I}  \left\langle F(x_i), x_i - x\right \rangle \leq \frac{\varepsilon^2}{2 L_F^2} |I| + \frac{\varepsilon^2}{2 M_g^2} |J| + R^2 - h^g  \sum_{i \in J}  \left( g(x_i)  - g(x)\right).
\end{equation}

Because the operator $F$ is monotone, we have
\begin{align}\label{alg1_eq2}
    \left\langle F(x), x_i - x \right\rangle & = \left\langle F(x_i), x_i - x   \right\rangle - \underbrace{\left\langle F(x_i) - F(x), x_i - x    \right\rangle}_{\geq 0} \nonumber
    \\& \leq \left\langle F(x_i), x_i - x   \right\rangle
\end{align}

We also have 
\begin{align} \label{alg1_eq3}
      h^F \sum_{i \in I} \left\langle F(x), x_i - x\right \rangle & =  h^F \left\langle F(x), \sum_{i \in I}   (x_i - x) \right \rangle = h^F |I| \left\langle F(x), \frac{1}{|I|} \sum_{i \in I}  x_i - x \right \rangle \nonumber
    \\& = h^F |I| \left\langle F(x),   \widehat{x} - x \right \rangle.
\end{align}

Therefore, from \eqref{alg1_eq1}, \eqref{alg1_eq2} and \eqref{alg1_eq3},  we get the following
\begin{align*}
    h^F |I| \left\langle F(x), \widehat{x} - x \right \rangle \leq \frac{\varepsilon^2}{2 L_F^2} |I| + \frac{\varepsilon^2}{2 M_g^2} |J| + R^2 - h^g \sum_{i \in J} \left( g(x_i)  - g(x)\right).
\end{align*}

Since for any $i \in J$, we have $g(x_i) - g(x_*) \geq g(x_i) > \varepsilon$, where $x_*$ is a solution of the problem under consideration, thus it represents a feasible point, i.e., $x_* \in Q$ and $g(x_*) \leq 0 $. Then, from the convexity of the function $g$, we have
\begin{align*}
    - h^g \sum_{i \in J} \left( g(x_i)  - g(x)\right) & = - h^g \sum_{i \in J} \left( g(x_i)  - g(x_*)\right) +  h^g \sum_{i \in J} \left( g(x)  - g(x_*)\right) 
    \\& < - h^g \sum_{i \in J} \varepsilon + h^g \sum_{i \in J} \left\langle \nabla g(x), x - x_* \right\rangle   
    \\& \leq - h^g  \varepsilon |J| + h^g M_g D |J|, 
\end{align*}
where in the last we used the Cauchy-Schwarz inequality and the fact that $g$ is $M_g$-Lipschitz, i.e., $\|\nabla g(x)\|_* \leq M_g, \;\; \forall x \in Q$. 

Thus, we get the following
\begin{align}
    h^F |I| \left\langle F(x),   \widehat{x} - x \right \rangle & <   \frac{\varepsilon^2}{2 L_F^2} |I| + \frac{\varepsilon^2}{2 M_g^2} |J|- h^g  \varepsilon |J| + h^g M_g D |J| + R^2 \nonumber
    \\& =  \frac{\varepsilon^2}{ L_F^2} |I|  - \frac{\varepsilon^2}{2 L_F^2} |I| - \frac{\varepsilon^2}{2 M_g^2} |J| + \frac{\varepsilon D |J|}{M_g} + R^2   \nonumber
    \\& =  \frac{\varepsilon^2}{ L_F^2} |I| - \left(\frac{\varepsilon^2}{2 L_F^2} |I| + \frac{\varepsilon^2}{2 M_g^2} |J| - \frac{\varepsilon D |J|}{M_g} \right) + R^2.   \label{eq1fgf2f1}
\end{align}

\begin{itemize}
\item From the first stopping rule (i.e., from \eqref{stop1_alg1}), we get 
\begin{equation*}
    h^F |I| \left\langle F(x),   \widehat{x} - x \right \rangle < \frac{\varepsilon^2}{ L_F^2} |I| \quad \forall x \in Q. 
\end{equation*}
Thus, since $h^F = \frac{\varepsilon}{L_F^2}$, we find 
\begin{equation*}
    \left\langle F(x),   \widehat{x} - x \right \rangle < \varepsilon \quad \forall x \in Q. 
\end{equation*}
\item   From the second stopping rule (i.e.,  from \eqref{stop2_alg1}), we get 
\begin{equation*}
    h^F |I| \left\langle F(x),   \widehat{x} - x \right \rangle < \frac{\varepsilon^2}{ L_F^2} |I| + \frac{\varepsilon D |J|}{M_g}\quad \forall x \in Q. 
\end{equation*}
Thus, since $h^F = \frac{\varepsilon}{L_F^2}$, we find 
\begin{equation*}
    \left\langle F(x),   \widehat{x} - x \right \rangle < \varepsilon + \frac{ D L_F^2|J|}{M_g |I|} \quad \forall x \in Q. 
\end{equation*}
\end{itemize}

Now, for any $i \in I$, we have $g(x_i) \leq \varepsilon$. Thus, from the convexity of $g$, we have 
\begin{equation*}
    g\left(\sum_{i \in I} x_i\right) \leq \sum_{i \in I} g(x_i) \leq \varepsilon |I|. 
\end{equation*}
This means that
$$
\frac{1}{|I|} g\left(\sum_{i \in I} x_i\right) \leq \varepsilon \quad \Longrightarrow \quad g\left(\frac{1}{|I|}\sum_{i \in I} x_i\right) = g(\widehat{x}) \leq \varepsilon. 
$$
\end{proof}

\subsection{Proof Theorem \ref{main_theorem_alg2}}\label{append_proof_theo_alg2}

\begin{proof}
For any $i \in I$, and $x \in Q$, we have 
\begin{align} \label{alg2_eq1_I}
    h_i^F \left\langle F(x_i), x_i - x\right \rangle & \leq \frac{(h_i^F)^2}{2} \|F(x_i)\|_*^2 + V(x, x_i) - V(x, x_{i+1}) \nonumber
    \\& =   \frac{h_i^F \varepsilon}{2}  + V(x, x_i) - V(x, x_{i+1}). 
\end{align}
For any $i \in J$, and $x \in Q$, we have 
\begin{align} \label{alg2_eq1_J}
    h_i^g \left( g(x_i)  - g(x)\right) & \leq \frac{(h_i^g)^2}{2} \|\nabla g(x_i)\|_*^2 + V(x, x_i) - V(x, x_{i+1}) \nonumber
    \\& = \frac{h_i^g \varepsilon}{2}  + V(x, x_i) - V(x, x_{i+1}) .
\end{align}

Summing up inequalities \eqref{alg2_eq1_I} and \eqref{alg2_eq1_J}, from $i = 0$ to $i = k-1$ for any $k \geq 1$, we get the following 
\begin{align*}\label{alg2_eq0}
    \sum_{i \in I} h_i^F \left\langle F(x_i), x_i - x\right \rangle +  \sum_{i \in J}h_i^g \left( g(x_i)  - g(x)\right)  
    & \leq \sum_{i \in I} \frac{h_i^F \varepsilon}{2} + \sum_{i \in J}\frac{h_i^g \varepsilon}{2}  
    \\& \quad +  \sum_{i = 0}^{k-1} \left(V(x, x_i) - V(x, x_{i+1})\right)  \nonumber
    \\& \leq \frac{\varepsilon}{2} \sum_{i = 0}^{k - 1} h_i + R^2,
\end{align*}
where in the last inequality, we set $h_i:= h_i^F$ for every $i \in I$ and $h_i : = h_i^g $ for every $i \in J$, and we used the fact
\begin{equation*}
    \sum_{i = 0}^{k-1} \left(V(x, x_i) - V(x, x_{i+1})\right) = V(x, x_0) - V(x, x_k) \leq V(x, x_0) \leq R^2. 
\end{equation*}

Thus, we have 
\begin{equation} \label{alg2_eq1}
     \sum_{i \in I}  h_i^F \left\langle F(x_i), x_i - x\right \rangle \leq \frac{\varepsilon}{2} \sum_{i = 0}^{k - 1} h_i + R^2 -  \sum_{i \in J} h_i^g \left( g(x_i)  - g(x)\right).
\end{equation}

Because the operator $F$ is monotone, we have
\begin{align}\label{alg2_eq2}
    \left\langle F(x), x_i - x \right\rangle & = \left\langle F(x_i), x_i - x   \right\rangle - \underbrace{\left\langle F(x_i) - F(x), x_i - x    \right\rangle}_{\geq 0} \nonumber
    \\& \leq \left\langle F(x_i), x_i - x   \right\rangle.
\end{align}

We also have 
\begin{align} \label{alg2_eq3}
    \sum_{i \in I} h_i^F  \left\langle F(x), x_i - x\right \rangle & =  \left\langle F(x), \sum_{i \in I} h_i^F (x_i - x) \right \rangle \nonumber
    \\& = \left(\sum_{i \in I} h_i^F\right)  \left\langle F(x), \frac{1}{\sum_{i \in I} h_i^F} \sum_{i \in I} h_i^F x_i - x \right \rangle \nonumber
    \\& = \left(\sum_{i \in I} h_i^F\right) \left\langle F(x),   \widehat{x} - x \right \rangle.
\end{align}

Therefore, from \eqref{alg2_eq1}, \eqref{alg2_eq2} and \eqref{alg2_eq3},  we get the following
\begin{align*}
    \left(\sum_{i \in I} h_i^F\right) \left\langle F(x),   \widehat{x} - x \right \rangle \leq \frac{\varepsilon}{2} \sum_{i = 0}^{k-1} h_i + R^2 -  \sum_{i \in J}h_i^g \left( g(x_i)  - g(x)\right).
\end{align*}

Since for any $i \in J$, we have $g(x_i) - g(x_*) \geq g(x_i) > \varepsilon$. 
Then by the convexity of the function $g$, we have
\begin{align*}
    -\sum_{i \in J}  h_i^g \left( g(x_i)  - g(x)\right) & = -  \sum_{i \in J} h_i^g \left( g(x_i)  - g(x_*)\right) +  \sum_{i \in J} h_i^g  \left( g(x)  - g(x_*)\right)
    \\& < -\varepsilon \sum_{i \in J}  h_i^g   + \sum_{i \in J}  h_i^g  \left\langle \nabla g(x), x - x_* \right\rangle
    \\& \leq -\varepsilon \sum_{i \in J}  h_i^g + M_g D \sum_{i \in J}  h_i^g, 
\end{align*}
where in the last we used the Cauchy-Schwarz inequality and the fact that $g$ is $M_g$-Lipschitz, i.e., $\|\nabla g(x)\|_* \leq M_g, \;\forall x \in Q$. 

Thus, we get the following
\begin{align*}
    & \quad \left(\sum_{i \in I} h_i^F\right) \left\langle F(x),   \widehat{x} - x \right \rangle   
    <   \frac{\varepsilon}{2} \sum_{i = 0}^{k-1} h_i -\varepsilon \sum_{i \in J}  h_i^g + M_g D \sum_{i \in J}  h_i^g + R^2
    \\& =  \frac{\varepsilon}{2} \sum_{i \in I} h_i^F + \frac{\varepsilon}{2}\sum_{i \in J} h_i^g -\varepsilon \sum_{i \in J}  h_i^g + M_g D \sum_{i \in J}  h_i^g + R^2 
    \\& =  \varepsilon \sum_{i \in I} h_i^F - \frac{\varepsilon}{2}   \sum_{i \in I} h_i^F - \frac{\varepsilon}{2}\sum_{i \in J} h_i^g + M_g D \sum_{i \in J}  h_i^g + R^2  
    \\& = \varepsilon \sum_{i \in I} h_i^F - \frac{\varepsilon^2}{2} \sum_{i \in I} \frac{1}{\|F(x_i)\|_*^2}  - \frac{\varepsilon^2}{2} \sum_{i \in J} \frac{1}{\|\nabla g(x_i)\|_*^2} +  M_g D \varepsilon \sum_{i \in J}\frac{1}{\|\nabla g(x_i)\|_*^2} 
    \\& \quad + R^2
    \\& = \varepsilon \sum_{i \in I} h_i^F - \left(\frac{\varepsilon^2}{2} \sum_{i = 0}^{k - 1} \frac{1}{M_i} -  M_g D \varepsilon \sum_{i \in J}\frac{1}{\|\nabla g(x_i)\|_*^2} \right) + R^2. 
\end{align*}
In the last equality, we used items 5 and 9 from the listing of Algorithm \ref{alg_2}.  Thus, we have 
\begin{align*}\label{alg2_eq_impo}
    \left(\sum_{i \in I} h_i^F\right) \left\langle F(x),   \widehat{x} - x \right \rangle   \nonumber 
    & < \varepsilon \sum_{i \in I} h_i^F - \left(\frac{\varepsilon^2}{2} \sum_{i = 0}^{k - 1} \frac{1}{M_i} -  M_g D \varepsilon \sum_{i \in J}\frac{1}{\|\nabla g(x_i)\|_*^2} \right)
    \\& \quad + R^2.
\end{align*}

\begin{itemize}
\item From the first stopping rule (i.e., from \eqref{stop1_alg2}), we get 
\begin{equation*}
    \left(\sum_{i \in I} h_i^F\right) \left\langle F(x),   \widehat{x} - x \right \rangle < \varepsilon \sum_{i \in I} h_i^F \quad \forall x \in Q. 
\end{equation*}
i.e.,  
\begin{equation*}
    \left\langle F(x),   \widehat{x} - x \right \rangle < \varepsilon \quad \forall x \in Q. 
\end{equation*}
    
\item   From the second stopping rule (i.e.,  from \eqref{stop2_alg2}), for any $x \in Q$,  we get 
\begin{equation*}
    \left(\sum_{i \in I} h_i^F\right) \left\langle F(x),   \widehat{x} - x \right \rangle <  \varepsilon \sum_{i \in I} h_i^F + M_g D \varepsilon \sum_{i \in J} \frac{1}{\|\nabla g(x_i)\|_*^2}. 
\end{equation*}
Thus, since $h_i^F = \frac{\varepsilon}{\|F(x_i)\|_*^2}$, we find 
\begin{equation*}
    \left\langle F(x),   \widehat{x} - x \right \rangle < \varepsilon +   M_g D\sum_{i \in J} \frac{1}{\|\nabla g(x_i)\|_*^2} \left( \sum_{i \in I} \frac{1}{\|F(x_i)\|_*^2}  \right)^{-1}.
\end{equation*}
\end{itemize}

Now, for any $i \in I$, we have $g(x_i) \leq \varepsilon$. Thus, from the convexity of $g$, we have 
\begin{align*}
    \left(\sum_{i \in I} h_i^F \right) g (\widehat{x}) & = \left(\sum_{i \in I} h_i^F \right) g \left( \frac{1}{\sum_{i \in I} h_i^F}\sum_{i \in I} h_i^F x_i \right) 
    \\& \leq \sum_{i \in I} h_i^F g(x_i) \leq  \sum_{i \in I} h_i^F \varepsilon. 
\end{align*}
This means, 
$$
    g(\widehat{x}) \leq \varepsilon. 
$$
\end{proof}

\subsection{Proof Theorem \ref{main_theorem_alg3}}\label{append_proof_theo_alg3}

\begin{proof}
For any $i \in I$, and $x \in Q$, we have 
\begin{align} \label{alg3_eq1_I}
    h_i^F \left\langle F(x_i), x_i - x\right \rangle & \leq \frac{(h_i^F)^2}{2} \|F(x_i)\|_*^2 + V(x, x_i) - V(x, x_{i+1}) \nonumber
    \\& =   \frac{\varepsilon^2}{2 \|F(x_i)\|_*^2}  + V(x, x_i) - V(x, x_{i+1}). 
\end{align}
For any $i \in J$, and $x \in Q$, we have 
\begin{align*} 
    h_i^g \left( g(x_i)  - g(x)\right) & \leq \frac{(h_i^g)^2}{2} \|\nabla g(x_i)\|_*^2 + V(x, x_i) - V(x, x_{i+1}) 
    \\& \leq \frac{(h_i^g)^2 }{2}M_g^2  + V(x, x_i) - V(x, x_{i+1}) 
    \\& = \frac{\varepsilon^2}{2}  + V(x, x_i) - V(x, x_{i+1}). 
\end{align*}

Thus, for any $i \in J$, we have
\begin{equation*}
     \frac{\varepsilon}{M_g} \left( g(x_i)  - g(x_*)\right) +  \frac{\varepsilon}{M_g} \left( g(x_*)  - g(x)\right) \leq \frac{\varepsilon^2}{2}  + V(x, x_i) - V(x, x_{i+1}). 
\end{equation*}

For any $i \in J$, we have $g(x_i) - g(x_*) \geq g(x_i) > \varepsilon M_g $, i.e., $\frac{g(x_i) - g(x_*)}{M_g} > \varepsilon$. Thus, we get the following
\begin{align*}
    \frac{\varepsilon^2}{2} & < V(x, x_i) - V(x, x_{i+1}) + \frac{\varepsilon}{M_g} \left( g(x) - g(x_*) \right)
    \\& \stackrel{(a)}{\leq}  V(x, x_i) - V(x, x_{i+1}) + \frac{\varepsilon}{M_g} \left\langle \nabla g(x), x - x_* \right \rangle
    \\& \stackrel{(b)}{\leq} V(x, x_i) - V(x, x_{i+1}) + \frac{\varepsilon}{M_g} \|\nabla g(x)\|_* \cdot \|x - x_*\|
    \\& \stackrel{(c)}{\leq} V(x, x_i) - V(x, x_{i+1}) + \varepsilon D,
\end{align*}
where in (a) we used the convexity of $g$, in (b) we used the Cauchy-Schwarz inequality, and in (c) the fact that $g$ is $M_g$-Lipschitz and $Q$ is bounded, its diameter is $D$. Thus, we get 
\begin{equation}\label{alg3_eq1_J}
    \frac{\varepsilon^2}{2} < V(x, x_i) - V(x, x_{i+1}) + \varepsilon D. 
\end{equation}

Summing up inequalities \eqref{alg3_eq1_I} and \eqref{alg3_eq1_J}, from $i = 0$ to $i = k-1$ for any $k \geq 1$, we get the following 
\begin{align}\label{eq_120}
     \sum_{i \in I} h_i^F \left\langle F(x_i), x_i - x\right \rangle   
     & < \frac{\varepsilon^2}{2} \sum_{i \in I} \frac{1}{\|F(x_i)\|_*^2}  + \sum_{i = 0}^{k-1} \left(V(x, x_i) - V(x, x_{i+1})\right)  \nonumber
     \\& \quad - \frac{\varepsilon^2}{2} |J| + \varepsilon D|J|   \nonumber
    \\& \leq \frac{ \varepsilon}{2} \sum_{i \in I}  h_i^F - \frac{\varepsilon^2}{2} |J| + \varepsilon D|J| + R^2 \nonumber
    \\& = \varepsilon \sum_{i \in I}  h_i^F - \frac{ \varepsilon}{2} \sum_{i \in I}  h_i^F  - \frac{\varepsilon^2}{2} |J| + \varepsilon D|J| + R^2 \nonumber
    \\& = \varepsilon \sum_{i \in I}  h_i^F - \left( \frac{\varepsilon^2}{2} \sum_{i \in I} \frac{1}{\|F(x_i)\|_*^2} +\frac{\varepsilon^2}{2} |J| - \varepsilon D|J| \right) + R^2. 
\end{align}

But, since $F$ is monotone, then in a similar way as in the proof of Theorem \ref{main_theorem_alg2}, we have 
\begin{equation}\label{eq111}
    \left(\sum_{i \in I} h_i^F\right) \langle F(x), \widehat{x} - x \rangle \leq \sum_{i \in I} h_i^F \langle F(x_i), x_i - x \rangle .
\end{equation}

Thus, from \eqref{eq_120} and \eqref{eq111} we get 
\begin{equation*}
     \left(\sum_{i \in I} h_i^F\right) \langle F(x), \widehat{x} - x \rangle < \varepsilon \sum_{i \in I}  h_i^F - \left( \frac{\varepsilon^2}{2} \sum_{i \in I} \frac{1}{\|F(x_i)\|_*^2} +\frac{\varepsilon^2}{2} |J| - \varepsilon D|J| \right) + R^2. 
\end{equation*}

\begin{itemize}
\item From the first stopping rule of Algorithm 3, we get 
\begin{equation*}
    \left(\sum_{i \in I} h_i^F\right) \left\langle F(x),   \widehat{x} - x \right \rangle < \varepsilon \sum_{i \in I} h_i^F \quad \forall x \in Q . 
\end{equation*}
i.e.,  
\begin{equation*}
    \left\langle F(x),   \widehat{x} - x \right \rangle < \varepsilon \quad \forall x \in Q. 
\end{equation*}
    
\item   From the second stopping rule of Algorithm 3, we get 
\begin{equation*}
    \left(\sum_{i \in I} h_i^F\right) \left\langle F(x),   \widehat{x} - x \right \rangle <  \varepsilon \sum_{i \in I} h_i^F + \varepsilon D |J| \quad \forall x \in Q . 
\end{equation*}
Thus, since $h_i^F = \frac{\varepsilon}{\|F(x_i)\|_*^2}$, we find 
\begin{equation*}
    \left\langle F(x),   \widehat{x} - x \right \rangle < \varepsilon +    D|J|\left( \sum_{i \in I} \frac{1}{\|F(x_i)\|_*^2}  \right)^{-1} \quad \forall x \in Q.
\end{equation*}
\end{itemize}

Now, for any $i \in I$, we have $g(x_i) \leq \varepsilon M_g$. Thus, from the convexity of $g$, we have 
\begin{align*}
    g (\widehat{x}) \leq \left(\sum_{i \in I} h_i^F \right)^{-1} \sum_{i \in I} h_i^F g(x_i) \leq \varepsilon M_g. 
\end{align*}
\end{proof}

\subsection{Proof Theorem \ref{main_theorem_alg4}}\label{append_proof_theo_alg4}

\begin{proof}
For any $i \in I$, and $x \in Q$, we have 
\begin{align} \label{alg4_eq1_I}
    h_i^F \left\langle F(x_i), x_i - x\right \rangle & \leq \frac{(h_i^F)^2}{2} \|F(x_i)\|_*^2 + V(x, x_i) - V(x, x_{i+1}) \nonumber
    \\& =   \frac{\varepsilon^2}{2}  + V(x, x_i) - V(x, x_{i+1}). 
\end{align}
For any $i \in J$, and $x \in Q$, we have 
\begin{align} \label{alg4_eq1_J}
    h_i^g \left( g(x_i)  - g(x)\right) & \leq \frac{(h_i^g)^2}{2} \|\nabla g(x_i)\|_*^2 + V(x, x_i) - V(x, x_{i+1}) \nonumber
    \\& = \frac{\varepsilon^2}{2 \|\nabla g(x_i)\|_*^2}  + V(x, x_i) - V(x, x_{i+1}) .
\end{align}

Summing up inequalities \eqref{alg4_eq1_I} and \eqref{alg4_eq1_J}, from $i = 0$ to $i = k-1$ for any $k \geq 1$, we get the following 
\begin{align*}
    \sum_{i \in I} h_i^F \left\langle F(x_i), x_i - x\right \rangle +  \sum_{i \in J}h_i^g \left( g(x_i)  - g(x)\right) 
     \leq  \frac{\varepsilon^2}{2} |I| + \sum_{i \in J}\frac{\varepsilon^2}{2 \|\nabla g(x_i)\|_*^2}  +  R^2,
\end{align*}
where we used the fact
\begin{equation*}
    \sum_{i = 0}^{k-1} \left(V(x, x_i) - V(x, x_{i+1})\right) = V(x, x_0) - V(x, x_k) \leq V(x, x_0) \leq R^2. 
\end{equation*}

But, since $F$ is monotone, then in a similar way as in the proof of Theorem \ref{main_theorem_alg2}, we have 
\begin{equation}\label{eq000}
    \left(\sum_{i \in I} h_i^F\right) \langle F(x), \widehat{x} - x \rangle \leq \sum_{i \in I} h_i^F \langle F(x_i), x_i - x \rangle .
\end{equation}

Therefore,  we get the following
\begin{align*}
    \left(\sum_{i \in I} h_i^F\right) \left\langle F(x),   \widehat{x} - x \right \rangle \leq \frac{\varepsilon^2}{2} |I| + \sum_{i \in J}\frac{\varepsilon^2}{2 \|\nabla g(x_i)\|_*^2}  +  R^2 - \sum_{i \in J}h_i^g \left( g(x_i)  - g(x)\right) .
\end{align*}

Since for any $i \in J$, we have $g(x_i) - g(x_*) \geq g(x_i) > \varepsilon$. Then by the convexity of the function $g$, we have
\begin{align*}
    -\sum_{i \in J}  h_i^g \left( g(x_i)  - g(x)\right) & = -  \sum_{i \in J} h_i^g \left( g(x_i)  - g(x_*)\right) +  \sum_{i \in J} h_i^g  \left( g(x)  - g(x_*)\right)
    \\& < -\varepsilon \sum_{i \in J}  h_i^g   + \sum_{i \in J}  h_i^g  \left\langle \nabla g(x), x - x_* \right\rangle
    \\& \leq - \sum_{i \in J}  \frac{\varepsilon^2}{\|\nabla g(x_i)\|_*^2}  + \varepsilon M_g D \sum_{i \in J} \frac{1}{\|\nabla g(x_i)\|_*^2}, 
\end{align*}
where in the last we used the Cauchy-Schwarz inequality and the fact that $g$ is $M_g$-Lipschitz, i.e., $\|\nabla g(x)\|_* \leq M_g, \;\; \forall x \in Q$. 

Thus, we get the following
\begin{align*}
    & \qquad \left(\sum_{i \in I} h_i^F\right) \left\langle F(x),   \widehat{x} - x \right \rangle   
    \\& <   \frac{\varepsilon^2}{2} |I| + \sum_{i \in J}  \frac{\varepsilon^2}{2 \|\nabla g(x_i)\|_*^2} +R^2 - \sum_{i \in J}  \frac{\varepsilon^2}{\|\nabla g(x_i)\|_*^2} + \varepsilon M_g D \sum_{i \in J} \frac{1}{\|\nabla g(x_i)\|_*^2}
    \\& =  \varepsilon^2 |I| - \frac{\varepsilon^2}{2} |I| - \frac{\varepsilon^2}{2} \sum_{i \in J}  \frac{1}{\|\nabla g(x_i)\|_*^2} + \varepsilon M_g D \sum_{i \in J} \frac{1}{\|\nabla g(x_i)\|_*^2} + R^2 
    \\& = \varepsilon^2 |I| - \left(\frac{\varepsilon^2}{2} |I| + \left(\frac{\varepsilon^2}{2}  - \varepsilon M_g D\right)  \sum_{i \in J} \frac{1}{\|\nabla g(x_i)\|_*^2}  \right)  +  R^2 .
\end{align*}

\begin{itemize}
\item From the first stopping rule of Algorithm 4, for any $x \in Q$,  we get 
\begin{equation*}
    \left(\sum_{i \in I} h_i^F\right) \left\langle F(x),   \widehat{x} - x \right \rangle < \varepsilon^2 |I|. 
\end{equation*}
Thus,  
\begin{equation*}
    \left\langle F(x),   \widehat{x} - x \right \rangle < \varepsilon^2 |I| \left(\sum_{i \in I} \frac{\varepsilon}{\|F(x_i)\|_*}\right)^{-1} = \varepsilon |I| \left(\sum_{i \in I} \frac{1}{\|F(x_i)\|_*}\right)^{-1}. 
\end{equation*}

But, $\|F(x_i)\|_* \leq L_F, \; \forall i \in I$, thus $\sum_{i \in I} \frac{1}{\|F(x_i)\|_*} \geq \frac{|I|}{L_F}$, and then we get the desired inequality, i.e., 
$$
    \left\langle F(x),   \widehat{x} - x \right \rangle < \varepsilon L_F \quad \forall x \in Q. 
$$
\item   From the second stopping rule of Algorithm 4, for any $x \in Q$, we get 
\begin{equation*}
    \left(\sum_{i \in I} h_i^F\right) \left\langle F(x),   \widehat{x} - x \right \rangle <  \varepsilon^2 |I| + M_g D \varepsilon \sum_{i \in J} \frac{1}{\|\nabla g(x_i)\|_*^2}. 
\end{equation*}
Thus, since $h_i^F = \frac{\varepsilon}{\|F(x_i)\|_*}$, and $\sum_{i \in I} \frac{1}{\|F(x_i)\|_*} \geq \frac{|I|}{L_F}$, we find the desired inequality, i.e.,
\begin{equation*}
    \left\langle F(x),   \widehat{x} - x \right \rangle < \varepsilon L_F +   \frac{M_g D  L_F}{|I|} \sum_{i \in J} \frac{1}{\|\nabla g(x_i)\|_*^2}. 
\end{equation*}
\end{itemize}

Now, for any $i \in I$, we have $g(x_i) \leq \varepsilon$. Thus, from the convexity of $g$, we have 
\begin{align*}
    \left(\sum_{i \in I} h_i^F \right) g (\widehat{x}) & = \left(\sum_{i \in I} h_i^F \right) g \left( \frac{1}{\sum_{i \in I} h_i^F}\sum_{i \in I} h_i^F x_i \right) 
    \\& \leq \sum_{i \in I} h_i^F g(x_i) \leq  \sum_{i \in I} h_i^F \varepsilon. 
\end{align*}
This means, 
$$
    g(\widehat{x}) \leq \varepsilon. 
$$   
\end{proof}

\subsection{Proof Theorem \ref{main_theorem_alg5}}\label{append_proof_theo_alg5}

\begin{proof}
For any $i \in I$, and $x \in Q$, we have 
\begin{align} \label{alg5_eq1_I}
    h_i^F \left\langle F(x_i), x_i - x\right \rangle & \leq \frac{(h_i^F)^2}{2} \|F(x_i)\|_*^2 + V(x, x_i) - V(x, x_{i+1}) \nonumber
    \\& =   \frac{\varepsilon^2}{2}  + V(x, x_i) - V(x, x_{i+1}). 
\end{align}
For any $i \in J, x \in Q$, and since $\|\nabla g(x_i) \|_* \leq M_g, \; \forall i \in J$,  we have 
\begin{align*} 
    h_i^g \left( g(x_i)  - g(x)\right) & \leq \frac{(h_i^g)^2}{2} \|\nabla g(x_i)\|_*^2 + V(x, x_i) - V(x, x_{i+1}) 
    \\& \leq  \frac{\varepsilon^2}{2}  + V(x, x_i) - V(x, x_{i+1}). 
\end{align*}

Thus, for any $i \in J$, we have
\begin{equation*}
     \frac{\varepsilon}{M_g} \left( g(x_i)  - g(x_*)\right) +  \frac{\varepsilon}{M_g} \left( g(x_*)  - g(x)\right) \leq \frac{\varepsilon^2}{2}  + V(x, x_i) - V(x, x_{i+1}). 
\end{equation*}

For any $i \in J$, we have $g(x_i) - g(x_*) \geq g(x_i) > \varepsilon M_g $, i.e., $\frac{g(x_i) - g(x_*)}{M_g} > \varepsilon$. Thus, as in the proof of Theorem \ref{main_theorem_alg3}, we get 
\begin{equation}\label{alg5_eq1_J}
    \frac{\varepsilon^2}{2} < V(x, x_i) - V(x, x_{i+1}) + \varepsilon D. 
\end{equation}

Summing up inequalities \eqref{alg5_eq1_I} and \eqref{alg5_eq1_J}, from $i = 0$ to $i = k-1$ for any $k \geq 1$, we get the following 
\begin{align}\label{eq_120_5}
     \sum_{i \in I} h_i^F \left\langle F(x_i), x_i - x\right \rangle   
     & < \frac{\varepsilon^2}{2} |I| - \frac{\varepsilon^2}{2} |J| + \varepsilon D|J| + \sum_{i = 0}^{k-1} \left(V(x, x_i) - V(x, x_{i+1})\right)    \nonumber
    \\& \leq \varepsilon^2 |I| - \left(\frac{\varepsilon^2}{2} |I|+ \frac{\varepsilon^2}{2} |J| - \varepsilon D |J| \right) + R^2. 
\end{align}

But, since $F$ is monotone, then in a similar way as in the proof of Theorem \ref{main_theorem_alg2}, we have 
\begin{equation}\label{eq111_5}
    \left(\sum_{i \in I} h_i^F\right) \langle F(x), \widehat{x} - x \rangle \leq \sum_{i \in I} h_i^F \langle F(x_i), x_i - x \rangle .
\end{equation}

Thus, from \eqref{eq_120_5} and \eqref{eq111_5} we get 
\begin{equation*}
     \left(\sum_{i \in I} h_i^F\right) \langle F(x), \widehat{x} - x \rangle < \varepsilon^2 |I| - \left(\frac{\varepsilon^2}{2} |I|+ \frac{\varepsilon^2}{2} |J| - \varepsilon D |J| \right) + R^2. 
\end{equation*}

\begin{itemize}
\item From the first stopping rule of Algorithm 5, for any $x \in Q$, we get 
\begin{equation*}
    \left(\sum_{i \in I} h_i^F\right) \left\langle F(x),   \widehat{x} - x \right \rangle < \varepsilon^2 |I|. 
\end{equation*}
Thus,  
\begin{equation*}
    \left\langle F(x),   \widehat{x} - x \right \rangle < \varepsilon^2 |I| \left(\sum_{i \in I} \frac{\varepsilon}{\|F(x_i)\|_*}\right)^{-1} = \varepsilon |I| \left(\sum_{i \in I} \frac{\varepsilon}{\|F(x_i)\|_*}\right)^{-1}. 
\end{equation*}

But, $\|F(x_i)\|_* \leq L_F, \; \forall i \in I$, thus $\sum_{i \in I} \frac{1}{\|F(x_i)\|_*} \geq \frac{|I|}{L_F}$, and then we get the desired inequality, i.e., 
$$
    \left\langle F(x),   \widehat{x} - x \right \rangle < \varepsilon L_F \quad \forall x \in Q. 
$$
    
\item From the second stopping rule of Algorithm 5, for any $x \in Q$,  we get 
\begin{equation*}
    \left(\sum_{i \in I} h_i^F\right) \left\langle F(x),   \widehat{x} - x \right \rangle <  \varepsilon^2 |I| +   \varepsilon D |J|. 
\end{equation*}
Thus, since $h_i^F = \frac{\varepsilon}{\|F(x_i)\|_*}$, and $\sum_{i \in I} \frac{1}{\|F(x_i)\|_*} \geq \frac{|I|}{L_F}$, we find the desired inequality, i.e.,
\begin{equation*}
    \left\langle F(x), \widehat{x} - x \right \rangle < \varepsilon L_F + \frac{ D L_F |J|}{|I|} \quad \forall x \in Q. 
\end{equation*}

\end{itemize}

Now, for any $i \in I$, we have $g(x_i) \leq \varepsilon M_g$. Thus, from the convexity of $g$, we have 
\begin{align*}
    g(\widehat{x}) \leq \left(\sum_{i \in I} h_i^F \right)^{-1} \sum_{i \in I} h_i^F g(x_i) \leq \varepsilon M_g. 
\end{align*}
\end{proof}

\subsection{Proof Theorem \ref{main_theorem_alg6}}\label{append_proof_theo_alg6}

\begin{proof}
For any $i \in I$, and $x \in Q$, we have 
\begin{align} \label{alg6_eq1_I}
    h_i^F \left\langle F(x_i), x_i - x\right \rangle & \leq \frac{(h_i^F)^2}{2} \|F(x_i)\|_*^2 + V(x, x_i) - V(x, x_{i+1}) \nonumber
    \\& = \frac{\varepsilon^2}{2 M_g^2}  + V(x, x_i) - V(x, x_{i+1}). 
\end{align}
For any $i \in J, x \in Q$, and since $\nabla g(x_i) \leq M_g, \; \forall i \in J$, we have 
\begin{align} \label{alg6_eq1_J}
    h_i^g \left( g(x_i)  - g(x)\right) & \leq \frac{(h_i^g)^2}{2} \|\nabla g(x_i)\|_*^2 + V(x, x_i) - V(x, x_{i+1}) \nonumber
    \\& \leq \frac{\varepsilon^2}{2 M_g^2}  + V(x, x_i) - V(x, x_{i+1}).
\end{align}

Summing up inequalities \eqref{alg6_eq1_I} and \eqref{alg6_eq1_J}, from $i = 0$ to $i = k-1$ for any $k \geq 1$, we get the following 
\begin{align*}
    \sum_{i \in I} h_i^F \left\langle F(x_i), x_i - x\right \rangle +  \sum_{i \in J}h_i^g \left( g(x_i)  - g(x)\right) 
     \leq  \frac{\varepsilon^2}{2 M_g^2}  (|I| + |J)| +  R^2,
\end{align*}
where we used the fact
\begin{equation*}
    \sum_{i = 0}^{k-1} \left(V(x, x_i) - V(x, x_{i+1})\right) = V(x, x_0) - V(x, x_k) \leq V(x, x_0) \leq R^2. 
\end{equation*}

But, since $F$ is monotone, then in a similar way as in the proof of Theorem \ref{main_theorem_alg2}, we have 
\begin{equation*}
    \left(\sum_{i \in I} h_i^F\right) \langle F(x), \widehat{x} - x \rangle \leq \sum_{i \in I} h_i^F \langle F(x_i), x_i - x \rangle .
\end{equation*}

Therefore,  we get the following
\begin{align*}
    \left(\sum_{i \in I} h_i^F\right) \left\langle F(x),   \widehat{x} - x \right \rangle \leq \frac{\varepsilon^2}{2M_g^2} (|I| + |J|) +  R^2 - \sum_{i \in J}h_i^g \left( g(x_i)  - g(x)\right) .
\end{align*}

Since for any $i \in J$, we have $g(x_i) - g(x_*) \geq g(x_i) > \varepsilon$. Then by the convexity of the function $g$, we have
\begin{align*}
    -\sum_{i \in J} h_i^g \left( g(x_i) - g(x)\right) & = -  \sum_{i \in J} h_i^g \left( g(x_i) - g(x_*)\right) +  \sum_{i \in J} h_i^g \left( g(x) - g(x_*)\right)
    \\& < -\varepsilon \sum_{i \in J} h_i^g   + \sum_{i \in J}  h_i^g  \left\langle \nabla g(x), x - x_* \right\rangle
    \\& \leq - \varepsilon \sum_{i \in J} \frac{\varepsilon }{M_g^2} + \frac{|J| \varepsilon}{M_g^2} \|\nabla g(x)\|_* \cdot \|x - x_*\|
    \\& \leq - \frac{\varepsilon^2}{M_g^2} |J| + \frac{\varepsilon D}{M_g} |J|, 
\end{align*}
where in the last two inequalities we used the Cauchy-Schwarz inequality and the fact that $g$ is $M_g$-Lipschitz, i.e., $\|\nabla g(x)\|_* \leq M_g, \;\; \forall x \in Q$. 

Thus, we get 
\begin{align*}
  \left(\sum_{i \in I} h_i^F\right) \left\langle F(x),   \widehat{x} - x \right \rangle   
  & <   \frac{\varepsilon^2}{2M_g^2} (|I| + |J|) - \frac{\varepsilon^2}{M_g^2} |J| + \frac{\varepsilon D}{M_g} |J| +  R^2
  \\& =\frac{\varepsilon^2}{M_g^2} |I| - \left( \frac{\varepsilon^2}{2M_g^2} |I| + \frac{\varepsilon^2}{2M_g^2} |J| -  \frac{\varepsilon D}{M_g} |J| \right) + R^2.
\end{align*}

\begin{itemize}
\item From the first stopping rule of Algorithm 6, for any $x \in Q$, we get 
\begin{equation*}
    \left(\sum_{i \in I} h_i^F\right) \left\langle F(x),   \widehat{x} - x \right \rangle < \frac{\varepsilon^2 |I|}{M_g^2}. 
\end{equation*}
Thus,  
\begin{equation*}
    \left\langle F(x), \widehat{x} - x \right \rangle < \frac{\varepsilon^2 |I|}{M_g^2} \left(\sum_{i \in I} \frac{\varepsilon}{\|F(x_i)\|_*}\right)^{-1} = \frac{\varepsilon^2 |I|}{M_g^2}  \left(\sum_{i \in I} \frac{1}{\|F(x_i)\|_*}\right)^{-1}. 
\end{equation*}

But, $\|F(x_i)\|_* \leq L_F, \; \forall i \in I$, thus $\sum_{i \in I} \frac{1}{\|F(x_i)\|_*} \geq \frac{|I|}{L_F}$, and then we get the desired inequality, i.e., 
$$
    \left\langle F(x),   \widehat{x} - x \right \rangle < \frac{\varepsilon L_F}{M_g}\quad \forall x \in Q. 
$$
\item   From the second stopping rule of Algorithm 6, for any $x \in Q$,  we get 
\begin{equation*}
    \left(\sum_{i \in I} h_i^F\right) \left\langle F(x),   \widehat{x} - x \right \rangle <  \frac{\varepsilon^2 |I|}{M_g^2} + \frac{\varepsilon |J| D}{M_g} . 
\end{equation*}
Thus, since $h_i^F = \frac{\varepsilon}{\|F(x_i)\|_*}$, and $\sum_{i \in I} \frac{1}{\|F(x_i)\|_*} \geq \frac{|I|}{L_F}$, we find the desired inequality, i.e.,
\begin{equation*}
    \left\langle F(x), \widehat{x} - x \right \rangle < \frac{\varepsilon L_F}{M_g} +  \frac{ |J| D L_F}{M_g |I|} \quad \forall x \in Q. 
\end{equation*}
\end{itemize}

Now, for any $i \in I$, we have $g(x_i) \leq \varepsilon$. Thus, from the convexity of $g$, we have 
\begin{align*}
    \left(\sum_{i \in I} h_i^F \right) g (\widehat{x}) & = \left(\sum_{i \in I} h_i^F \right) g \left( \frac{1}{\sum_{i \in I} h_i^F}\sum_{i \in I} h_i^F x_i \right) 
    \\& \leq \sum_{i \in I} h_i^F g(x_i) \leq  \sum_{i \in I} h_i^F \varepsilon. 
\end{align*}
This means, 
$$
    g(\widehat{x}) \leq \varepsilon. 
$$   
\end{proof}

\subsection{Proof Theorem \ref{main_theorem_alg7}}\label{append_proof_theo_alg7}

\begin{proof}
For any $i \in I$, and $x \in Q$, we have 
\begin{align} \label{alg7_eq1_I}
    \left\langle F(x_i), x_i - x\right \rangle \leq \frac{h_i^F}{2} \|F(x_i)\|_*^2 + \frac{1}{h_i^F} \left(V(x, x_i) - V(x, x_{i+1})\right).
\end{align}

For any $i \in J, x \in Q$, we have 
\begin{align} \label{alg7_eq1_J}
    g(x_i)  - g(x) \leq \frac{h_i^g}{2} \|\nabla g(x_i)\|_*^2 + \frac{1}{h_i^g} \left(V(x, x_i) - V(x, x_{i+1})\right). 
\end{align}

Summing up inequalities \eqref{alg7_eq1_I} and \eqref{alg7_eq1_J}, from $i = 0$ to $i = k-1$ for any $k \geq 1$, and by setting $h_i : = h_i^F \, \forall i \in I,  h_i : = h_i^g \, \forall i \in J$, we get the following 
\begin{align*}
    & \quad \sum_{i \in I} \left\langle F(x_i), x_i - x\right \rangle +  \sum_{i \in J} \left(g(x_i)  - g(x) \right)
    \\& 
    \leq \sum_{i = 0}^{k - 1} \frac{h_i M_i^2}{2} + \sum_{i = 0}^{k - 1} \frac{1}{h_i} \left(V(x, x_i) - V(x, x_{i+1})\right).
\end{align*}

But, since $F$ is monotone, then 
\begin{equation*}
     \langle F(x), x_i - x \rangle \leq  \langle F(x_i), x_i - x \rangle, \quad \forall i \in I, \, \forall x \in Q .
\end{equation*}
We also have, 
\begin{align*}
    \sum_{i \in I} \left\langle F(x), x_i - x\right \rangle & =  \left\langle F(x), \sum_{i \in I} (x_i - x) \right \rangle
    = |I| \left\langle F(x), \frac{1}{|I|} \sum_{i \in I} x_i - x \right \rangle
    \\& = |I| \left\langle F(x), \widehat{x} - x \right \rangle.
\end{align*}

Since for any $i \in J$, we have $g(x_i) - g(x_*) \geq g(x_i) > \varepsilon$. Then by the convexity of the function $g$, we have
\begin{align*}
    -\sum_{i \in J}  \left( g(x_i)  - g(x)\right) & = -  \sum_{i \in J}  \left( g(x_i)  - g(x_*)\right) +  \sum_{i \in J} \left( g(x)  - g(x_*)\right)
    \\& < -\varepsilon |J|  + |J|  \left\langle \nabla g(x), x - x_* \right\rangle
    \\& \leq - \varepsilon |J| + |J| \cdot  \|\nabla g(x)\|_* \cdot \|x - x_*\|
    \\& \leq - \varepsilon |J| + M_g D |J|, 
\end{align*}
In the last two inequalities, we used the Cauchy-Schwarz inequality and the fact that $g$ is $M_g$-Lipschitz, i.e., $\|\nabla g(x)\|_* \leq M_g, \; \forall x \in Q$. 

Thus, we get the following
\begin{align}\label{gdsfsd52}
    |I| \left\langle F(x), \widehat{x} - x \right \rangle  \nonumber
    & < \sum_{i = 0}^{k - 1} \frac{h_i M_i^2}{2}  + \sum_{i = 0}^{k - 1} \frac{1}{h_i} \left(V(x, x_i) - V(x, x_{i+1})\right) \nonumber
    \\& \quad + M_g D |J| - \varepsilon |J| \nonumber
    \\& = \frac{\theta}{2}\sum\limits_{i=0}^{k-1}  \frac{M_i^2}{\left(\sum_{t=0}^i M_t^2 \right)^{1/2}}  + \sum_{i = 0}^{k - 1} \frac{1}{h_i} \left(V(x, x_i) - V(x, x_{i+1})\right) 
    \\& \quad + M_g D |J| - \varepsilon |J|. \nonumber
\end{align}

But, 
\begin{align}\label{54412d}
    & \quad \sum\limits_{i = 0}^{k-1} \frac{1}{h_i} \left( V(x, x_i) - V(x, x_{i+1}) \right) \nonumber
    \\& = \frac{1}{h_0} V(x, x_0) +\sum_{i=0}^{k-2}  \left( \frac{1}{h_{i+1}} - \frac{1}{h_i} \right)  V(x, x_{i+1}) - \frac{1}{h_{k-1}} V(x,x_i) \nonumber 
    \\& \leq  \frac{\theta^2}{h_0} + \theta^2 \sum\limits_{i=0}^{k-2} \Big(\frac{1}{h_{i+1}} - \frac{1}{h_i} \Big) = \frac{\theta^2}{h_{k-1}} = \theta \left(\sum_{i = 0}^{k - 1} M_i^2\right)^{1/2}.
\end{align}

By induction (on $k \geq 1$), we can simply prove that
\begin{equation}\label{1201454}
    \sum\limits_{i=0}^{k-1} \frac{M_i^2}{\left(\sum_{t=0}^i M_t^2 \right)^{1/2}} \leq 2\left( \sum_{i=0}^{k-1} M_i^2 \right)^{1/2}.
\end{equation}

Therefore, from \eqref{gdsfsd52}, \eqref{54412d} and \eqref{1201454}, we get
\begin{align*}
    |I| \left\langle F(x), \widehat{x} - x \right \rangle <  2\theta \left( \sum_{i=0}^{k-1} M_i^2 \right)^{1/2} + |J| M _g D - \varepsilon k + \varepsilon |I|. 
\end{align*}

\begin{itemize}
\item From the first stopping rule of Algorithm 7, for any $x \in Q$, we get 
\begin{equation*}
    |I| \left\langle F(x), \widehat{x} - x \right \rangle < \varepsilon |I| \quad \Longrightarrow \quad  \left\langle F(x),   \widehat{x} - x \right \rangle < \varepsilon. 
\end{equation*}

\item   From the second stopping rule of Algorithm 7, for any $x \in Q$,  we get 
\begin{equation*}
    |I| \left\langle F(x), \widehat{x} - x \right \rangle <  \varepsilon |I| + |J| M _g D\quad \Longrightarrow \quad  \left\langle F(x),   \widehat{x} - x \right \rangle < \varepsilon + \frac{|J| M _g D}{|I|} . 
\end{equation*}

\end{itemize}

Now, for any $i \in I$, we have $g(x_i) \leq \varepsilon$. Thus, from the convexity of $g$, we have 
\begin{align*}
    |I| g (\widehat{x})  = |I| g \left( \frac{1}{|I|}\sum_{i \in I} x_i \right) 
    \leq \sum_{i \in I} g(x_i) \leq  |I| \varepsilon. 
\end{align*}
This means, 
$$
    g(\widehat{x}) \leq \varepsilon. 
$$   
\end{proof}

\section{Appendix 2: Proof Theorem \ref{main_theorem_alg2_modif}}\label{append_proof_theo_modalg2}

\begin{proof}
For $i \in I$, and $x \in Q$, we have 
\begin{align} \label{alg2_mod_eq1_I}
    h_i^F \left\langle F(x_i), x_i - x\right \rangle  \leq \frac{h_i^F \varepsilon}{2}  + V(x, x_i) - V(x, x_{i+1}). 
\end{align}
For $i \in  J$, and $x \in Q$, there is $N(i) \in \{1, 2, \ldots, m\}$ such that
\begin{align} \label{alg2_mod_eq1_J}
    h_i^{g_{N(i)}} \left( g_{N(i)}(x_i)  - g_{N(i)}(x)\right) & \leq \frac{\left(h_i^{g_{N(i)}}\right)^2}{2} \|\nabla g_{N(i)}(x_i)\|_*^2 + V(x, x_i) - V(x, x_{i+1}) \nonumber
    \\& = \frac{h_i^{g_{N(i)}} \varepsilon}{2}  + V(x, x_i) - V(x, x_{i+1}) .
\end{align}

Summing up inequalities \eqref{alg2_mod_eq1_I} and \eqref{alg2_mod_eq1_J}, from $i = 0$ to $i = k-1$ for any $k \geq 1$, we get the following 
\begin{equation*} 
     \sum_{i \in I}  h_i^F \left\langle F(x_i), x_i - x\right \rangle \leq \frac{\varepsilon}{2} \sum_{i = 0}^{k - 1} h_i + R^2 -  \sum_{i \in J} h_i^{g_{N(i)}} \left( g_{N(i)}(x_i)  - g_{N(i)}(x)\right).
\end{equation*}

Because of the operator $F$ is monotone, then we have
\begin{align*}
    \left(\sum_{i \in I} h_i^F\right) \left\langle F(x),   \widehat{x} - x \right \rangle \leq \frac{\varepsilon}{2} \sum_{i = 0}^{k-1} h_i + R^2 -  \sum_{i \in J} h_i^{g_{N(i)}} \left( g_{N(i)}(x_i)  - g_{N(i)}(x)\right).
\end{align*}

Since for $i \in J$, we have $g_{N(i)}(x_i) - g_{N(i)}(x_*) \geq g_{N(i)}(x_i) > \varepsilon$. Then by the convexity of the function $g_{N(i)}$, we have
\begin{align*}
    & \quad -\sum_{i \in J}  h_i^{g_{N(i)}} \left( g_{N(i)}(x_i)  - g_{N(i)}(x)\right) \\&  = -  \sum_{i \in J} h_i^{g_{N(i)}} \left( g_{N(i)}(x_i)  - g_{N(i)}(x_*)\right) +  \sum_{i \in J} h_i^{g_{N(i)}}  \left( g_{N(i)}(x)  - g_{N(i)}(x_*)\right)
    \\& < -\varepsilon \sum_{i \in J}  h_i^{g_{N(i)}}  + \sum_{i \in J}  h_i^{g_{N(i)}}  \left\langle \nabla g_{N(i)}(x), x - x_* \right\rangle
    \\& \leq -\varepsilon \sum_{i \in J}  h_i^{g_{N(i)}} + M_g D \sum_{i \in J}  h_i^{g_{N(i)}}, 
\end{align*}
where in the last we used the Cauchy-Schwarz inequality and the fact that $g_{N(i)}$ is $M_g$-Lipschitz, i.e., $\|\nabla g_{N(i)}(x)\|_* \leq M_g, \;\; \forall x \in Q$. 

Thus, we get the following
\begin{align*}
    & \quad \left(\sum_{i \in I} h_i^F\right) \left\langle F(x),   \widehat{x} - x \right \rangle <   \frac{\varepsilon}{2} \sum_{i = 0}^{k-1} h_i -\varepsilon \sum_{i \in J}  h_i^{g_{N(i)}} + M_g D \sum_{i \in J}  h_i^{g_{N(i)}} + R^2
    \\& =  \frac{\varepsilon}{2} \sum_{i \in I} h_i^F + \frac{\varepsilon}{2}\sum_{i \in J} h_i^{g_{N(i)}} -\varepsilon \sum_{i \in J}  h_i^{g_{N(i)}} + M_g D \sum_{i \in J}  h_i^{g_{N(i)}} + R^2 
    \\& =  \varepsilon \sum_{i \in I} h_i^F - \frac{\varepsilon}{2}   \sum_{i \in I} h_i^F - \frac{\varepsilon}{2}\sum_{i \in J} h_i^{g_{N(i)}} + M_g D \sum_{i \in J}  h_i^{g_{N(i)}} + R^2  
    \\& = \varepsilon \sum_{i \in I} h_i^F - \frac{\varepsilon^2}{2} \sum_{i \in I} \frac{1}{\|F(x_i)\|_*^2}  - \frac{\varepsilon^2}{2} \sum_{i \in J} \frac{1}{\|\nabla g_{N(i)}(x_i)\|_*^2}  \\& \quad+  M_g D \varepsilon \sum_{i \in J}\frac{1}{\|\nabla g_{N(i)}(x_i)\|_*^2} 
     + R^2
    \\& = \varepsilon \sum_{i \in I} h_i^F - \left(\frac{\varepsilon^2}{2} \sum_{i = 0}^{k - 1} \frac{1}{M_i} -  M_g D \varepsilon \sum_{i \in J}\frac{1}{\|\nabla g_{N(i)}(x_i)\|_*^2} \right) + R^2. 
\end{align*}

\bigskip 

\begin{itemize}
\item From the first stopping rule (i.e., from \eqref{stop1_alg2_mod}), we get 
\begin{equation*}
    \left(\sum_{i \in I} h_i^F\right) \left\langle F(x),   \widehat{x} - x \right \rangle < \varepsilon \sum_{i \in I} h_i^F . 
\end{equation*}
i.e.,  
\begin{equation*}
    \left\langle F(x),   \widehat{x} - x \right \rangle < \varepsilon. 
\end{equation*}
    
\item   From the second stopping rule (i.e.,  from \eqref{stop2_alg2_mod}), we get 
\begin{equation*}
    \left(\sum_{i \in I} h_i^F\right) \left\langle F(x),   \widehat{x} - x \right \rangle <  \varepsilon \sum_{i \in I} h_i^F + M_g D \varepsilon \sum_{i \in J} \frac{1}{\|\nabla g_{N(i)}(x_i)\|_*^2}. 
\end{equation*}
Thus, since $h_i^F = \frac{\varepsilon}{\|F(x_i)\|_*^2}$, we find 
\begin{equation*}
    \left\langle F(x),   \widehat{x} - x \right \rangle < \varepsilon +   M_g D\sum_{i \in J} \frac{1}{\|\nabla g_{N(i)}(x_i)\|_*^2} \left( \sum_{i \in I} \frac{1}{\|F(x_i)\|_*^2}  \right)^{-1}.
\end{equation*}
\end{itemize}

Now, for any $i \in I$, we have $g_j(x_i)   \leq \varepsilon, \; \forall j = 1, \ldots, m$, thus $g(x_i) = \max_{1 \leq j \leq m} \{g_j(x_i)\} \leq \varepsilon$. Therefore, from the convexity of $g$, we have 
\begin{align*}
    \left(\sum_{i \in I} h_i^F \right) g (\widehat{x}) & = \left(\sum_{i \in I} h_i^F \right) g \left( \frac{1}{\sum_{i \in I} h_i^F}\sum_{i \in I} h_i^F x_i \right) 
    \\& \leq \sum_{i \in I} h_i^F g(x_i) \leq  \sum_{i \in I} h_i^F \varepsilon. 
\end{align*}
This means $g(\widehat{x}) \leq \varepsilon$, thus
$$
    g_i(\widehat{x}) \leq \varepsilon, \quad \forall i \in \{1, 2, \ldots, m\}. 
$$
\end{proof}

\section{Appendix 3:  Analysis of the proposed algorithms with $\delta$-monotone operators}

\begin{definition}($\delta$-monotone operator). 
Let $\delta>0$. The operator $F :Q \longrightarrow \textbf{E}^*$ is called $\delta$-monotone, if it holds:
\begin{equation}\label{d_monot}
\langle F(y)-F(x),y-x\rangle \geq -\delta \quad \forall x, y \in Q.
\end{equation}
\end{definition}
For example, we can consider $F = \nabla_{\delta} f$ for $\delta$-subgradient $\nabla_{\delta} f(x)$ of convex function $f$ at point $x \in Q$: $f(y) - f(x) \geq \langle  \nabla_{\delta} f(x), y - x \rangle - \delta $ for each $y \in Q$ (see e.g., \cite{Polyak}, Chapter 5). Note that when $\delta = 0$, we get the definition of monotone operators. 

The analysis of algorithms under the assumption that the operator $F$ is $\delta$-monotone will be the same as the analysis with monotone operators, and the results will be the same with a slight difference, where we will have the following inequalities.

\begin{enumerate}
\item For Algorithm \ref{alg_1}, we have, 
\begin{itemize}
    \item with stopping criterion 1 \eqref{stop1_alg1}, we get 
\begin{equation}\label{alg1stop1_quality_F_sigma}
    \left\langle F(x), \widehat{x} - x \right\rangle < \varepsilon + \delta \quad \forall x \in Q .
\end{equation}

\item With stopping criterion 2 \eqref{stop2_alg1}, we get 
\begin{equation}\label{alg1stop2_quality_F_sigma}
    \left\langle F(x), \widehat{x} - x \right\rangle < \varepsilon  + \delta + \frac{D L_F^2 |J|}{M_g |I|}  \quad \forall x \in Q. 
\end{equation}
\end{itemize} 

\item For Algorithm \ref{alg_2}, we have,
\begin{itemize}
\item with stopping criterion 1 \eqref{stop1_alg2}, we get 
\begin{equation*}
    \left\langle F(x), \widehat{x} - x \right\rangle < \varepsilon + \delta  \quad \forall x \in Q.
\end{equation*}

\item With stopping criterion 2 \eqref{stop2_alg2}, we get 
\begin{equation*}
    \left\langle F(x), \widehat{x} - x \right\rangle < \varepsilon + \delta +  M_g D\sum_{i \in J} \frac{1}{\|\nabla g(x_i)\|_*^2} \left( \sum_{i \in I} \frac{1}{\|F(x_i)\|_*^2}  \right)^{-1} \quad \forall x \in Q. 
\end{equation*}

\end{itemize}

\item For Algorithm 3, we have,
\begin{itemize}
    \item with stopping criterion 1, we get 
\begin{equation*}
    \left\langle F(x), \widehat{x} - x \right\rangle < \varepsilon + \delta \quad \forall x \in Q.
\end{equation*}

\item With stopping criterion 2, we get 
\begin{equation*}
    \left\langle F(x), \widehat{x} - x \right\rangle < \varepsilon + \delta + D|J| \left(\sum_{i \in I} \frac{1}{\|F(x_i)\|_*^2}\right)^{-1} \quad \forall x \in Q.
\end{equation*}
\end{itemize}

\item For Algorithm 4, we have,
\begin{itemize}
\item with stopping criterion 1, we get 
\begin{equation*}
    \left\langle F(x), \widehat{x} - x \right\rangle < \varepsilon L_F + \delta  \quad \forall x \in Q.
\end{equation*}

\item With stopping criterion 2, we get 
\begin{equation*}
    \left\langle F(x), \widehat{x} - x \right\rangle < \varepsilon L_F + \delta + \frac{M_g D L_F}{|I|} \sum_{i \in I} \frac{1}{\|\nabla g(x_i)\|_*^2}  \quad \forall x \in Q.
\end{equation*} 
\end{itemize}

\item For Algorithm 5, we have,
\begin{itemize}
\item with stopping criterion 1, we get 
\begin{equation*}
    \left\langle F(x), \widehat{x} - x \right\rangle < \varepsilon L_F + \delta  \quad \forall x \in Q.
\end{equation*}

\item With stopping criterion 2, we get 
\begin{equation*}
    \left\langle F(x), \widehat{x} - x \right\rangle < \varepsilon L_F + \delta + \frac{ D L_F |J|}{|I|} \quad \forall x \in Q.
\end{equation*}
\end{itemize}

\item For Algorithm 6, we have,
\begin{itemize}
\item with stopping criterion 1, we get 
\begin{equation*}
    \left\langle F(x), \widehat{x} - x \right\rangle < \frac{\varepsilon L_F}{M_g} + \delta  \quad \forall x \in Q.
\end{equation*}

\item With stopping criterion 2, we get 
\begin{equation*}
    \left\langle F(x), \widehat{x} - x \right\rangle < \frac{\varepsilon L_F}{M_g} + \delta +  \frac{D L_F |J|}{M_g |I|} \quad \forall x \in Q.
\end{equation*}
\end{itemize}

\item For Algorithm 7, we have,
\begin{itemize}
\item with stopping criterion 1, we get 
\begin{equation*}
    \left\langle F(x), \widehat{x} - x \right\rangle < \varepsilon + \delta  \quad \forall x \in Q.
\end{equation*}

\item With stopping criterion 2, we get 
\begin{equation*}
    \left\langle F(x), \widehat{x} - x \right\rangle < \varepsilon  +  \frac{|J|M_g D  }{ |I|} + \delta \quad \forall x \in Q.
\end{equation*}
\end{itemize}

\end{enumerate}

Let us briefly show how we can prove the result of Algorithm \ref{alg_1} for $\delta$-monotone operator $F$, i.e., the results \eqref{alg1stop1_quality_F_sigma} and \eqref{alg1stop2_quality_F_sigma}. The proof will be the same as the proof of Theorem \ref{main_theorem_alg1}, with a slight modification concerning the $\delta$ monotonicity of the operator $F$. 

For any $i \in I$, and $x \in Q$, since $F$ is $\delta$-monotone operator, we get 
\[
    \left\langle F(x), x_i - x\right\rangle - \delta \leq \left\langle F(x_i), x_i - x \right\rangle. 
\]

Now, from \eqref{alg1_eq1}, \eqref{alg1_eq3} and \eqref{eq1fgf2f1}, for any $x \in Q$, we get 
\begin{align*}
    h^F |I| \left(\left\langle F(x),   \widehat{x} - x \right \rangle  -  \delta\right) < \frac{\varepsilon^2}{ L_F^2} |I| - \left(\frac{\varepsilon^2}{2 L_F^2} |I| + \frac{\varepsilon^2}{2 M_g^2} |J| - \frac{\varepsilon D |J|}{M_g} \right) + R^2. 
\end{align*}

Thus, with stopping criterion 1 \eqref{stop1_alg1} we get \eqref{alg1stop1_quality_F_sigma}, and with stopping criterion 2 \eqref{stop2_alg1} we get \eqref{alg1stop2_quality_F_sigma}.

\section{Appendix 4: Additional experiments: 2D Example for min-max problem: Forsaken Game}

In this section, we provide the following 2D example with min-max objectives and one ellipse constraint to illustrate the trajectory of Algorithms \ref{alg_2}---6. This problem is connected with the forsaken game \cite{Hsieh2021Limits}. It does not have constraints in its original formulation, and we add an ellipse constraint to test our algorithms \cite{Zhang2024Primal}. 

The forsaken game has the objective
\begin{equation}
\begin{aligned}
    & \min _{x \in \mathbb{R}} \max _{y \in \mathbb{R}} \left\{x(y-0.45)+h(x)-h(y) \right\}, \\
    & \text { s.t. } \quad x^2+4 y^2 \leq 1,
\end{aligned}
\end{equation}
where $h(x) = \frac{x^2}{4}  - \frac{x^4}{2} + \frac{x^6}{6}$. This problem has a desirable $(x_*, y_*) \approx (0.08, 0.4)$. This example is known to exhibit limit cycles, complicating equilibrium computations. Notably, the limit cycle near the solution is unstable, repelling any trajectories that approach it. 

\begin{figure}[htp]
    \minipage{0.50\textwidth}
    \includegraphics[width=\linewidth]{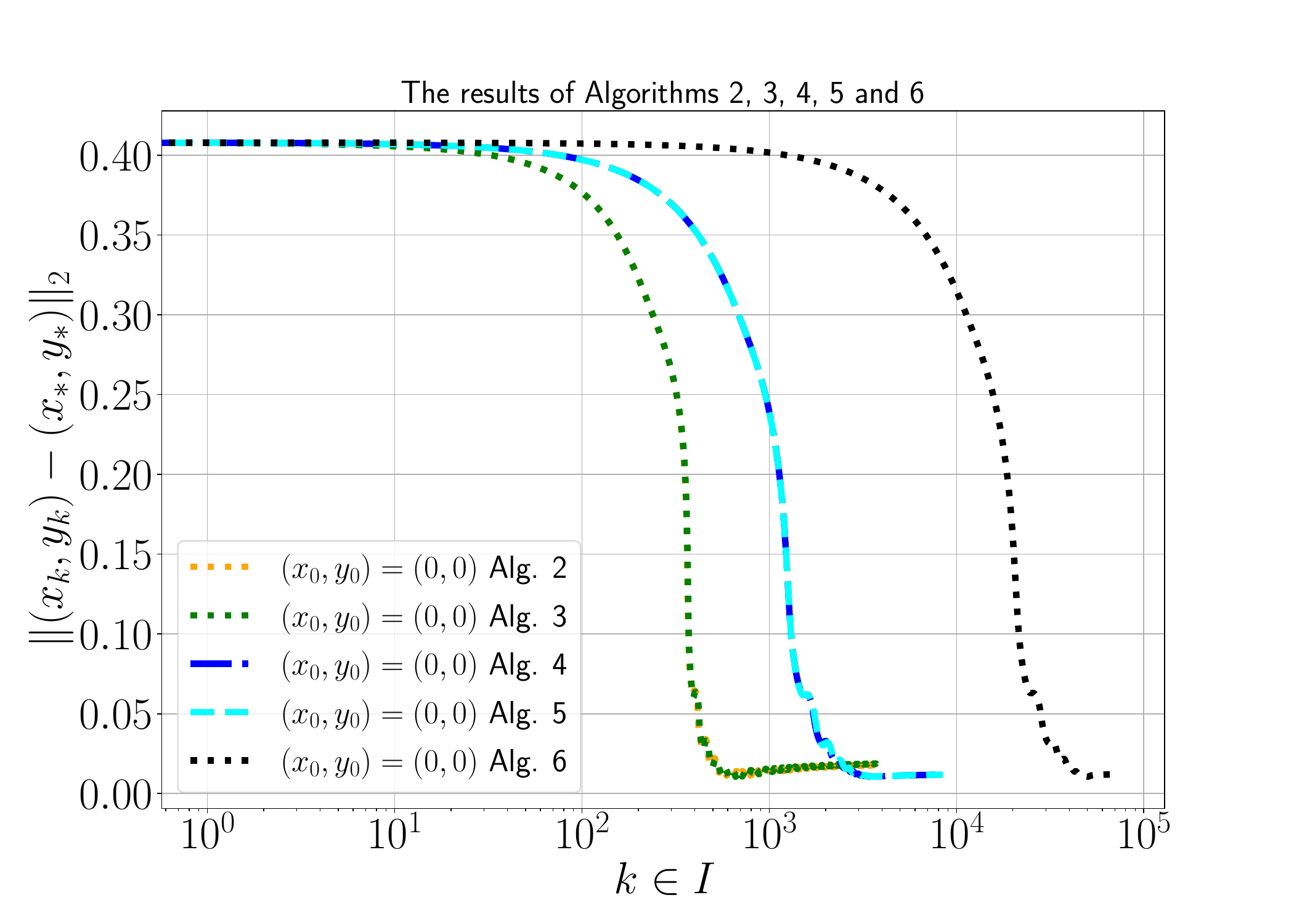}
    \endminipage
    \minipage{0.50\textwidth}
    \includegraphics[width=\linewidth]{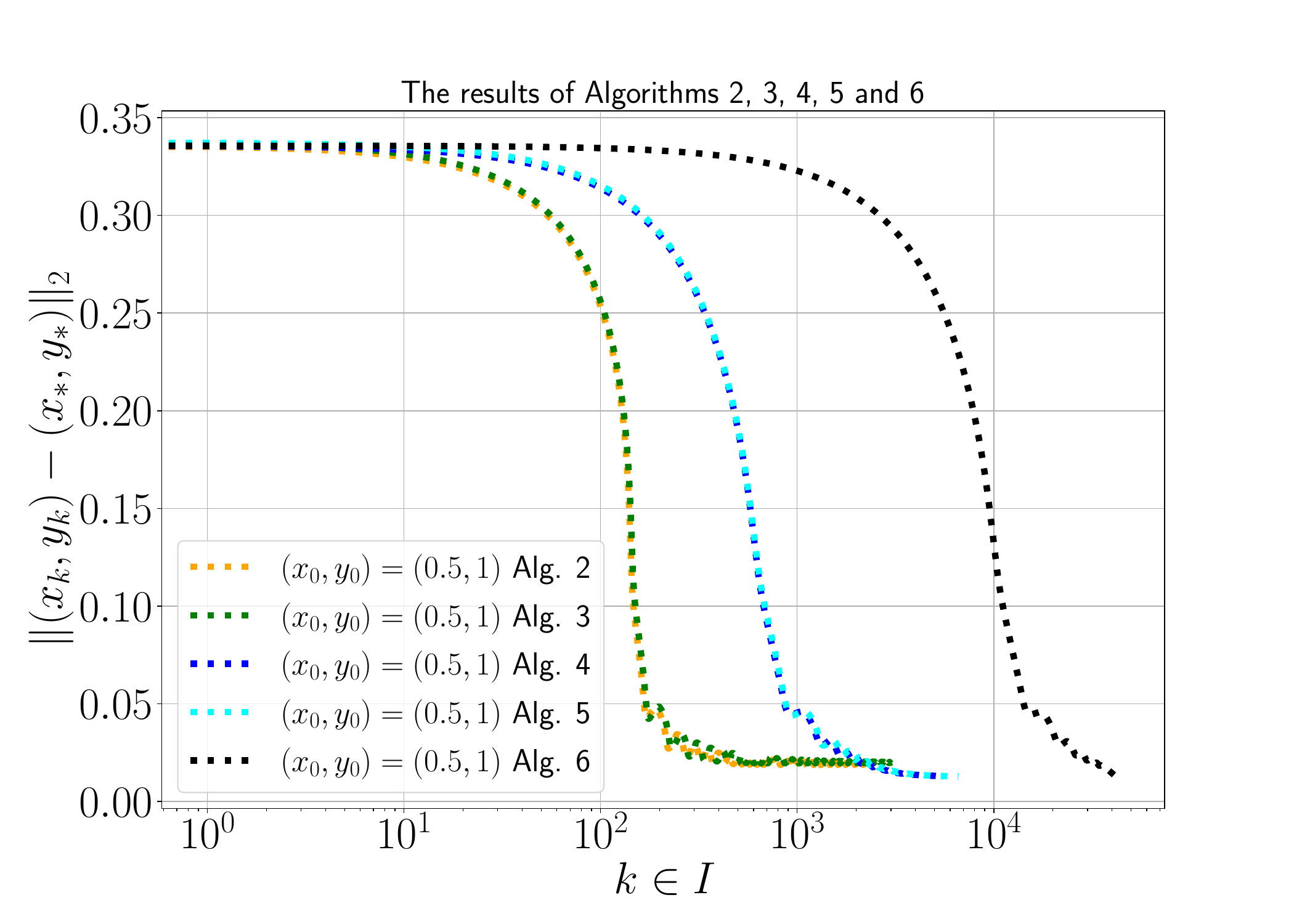}
    \endminipage
    \caption{Results of  Algorithms \ref{alg_2}---6, for the forsaken game with different initial points, with $\varepsilon = 0.001$ and by $10^4$ iterations of each algorithm, except Algorithm 6 with $10^5$ iterations.}
    \label{fig_smallest_covering_n200}
\end{figure} 

\end{document}